\def\TFAArxivBuild{1}
\documentclass{article}
\usepackage{iclr2027_conference,times}
\usepackage[T1]{fontenc}
\usepackage{amsmath,amssymb,amsthm,booktabs,graphicx,multirow}
\usepackage[table]{xcolor}
\usepackage{caption}
\usepackage{hyperref}
\usepackage[capitalize,nameinlink]{cleveref}

\newtheorem{proposition}{Proposition}
\graphicspath{{figures/}}
\ifdefined\TFAArxivBuild
\iclrfinalcopy
\title{Calibrating Retrieval Geometry: Reliability-Guided Training-Free Aggregation for Visual Place Recognition}
\author{\parbox{0.97\textwidth}{\centering\normalfont
Xin Li\textsuperscript{1,2,*},
Zhimin Mao\textsuperscript{1,2},
Shang Wang\textsuperscript{1,2},
Siyuan Duan\textsuperscript{1,2},
Geng Zhang\textsuperscript{1,*}\\[6pt]
\small
\textsuperscript{1}Key Laboratory of Spectral Imaging Technology CAS,\\
Xi'an Institute of Optics and Precision Mechanics,\\
Chinese Academy of Sciences, Xi'an, 710119, China\\[3pt]
\textsuperscript{2}University of Chinese Academy of Sciences,\\
Beijing, 100049, China\\[5pt]
\textsuperscript{*}Corresponding authors:
\texttt{lixin200@mails.ucas.edu.cn}, \texttt{gzhang@opt.ac.cn}
}}
\let\TFAPreprintMaketitle\maketitle
\renewcommand{\maketitle}{\TFAPreprintMaketitle\fancyhead{}\lhead{Preprint}}
\hypersetup{
 pdftitle={Calibrating Retrieval Geometry: Reliability-Guided Training-Free Aggregation for Visual Place Recognition},
 pdfauthor={Xin Li, Zhimin Mao, Shang Wang, Siyuan Duan, Geng Zhang}
}

\else
\title{TFA: Calibrating Frozen Retrieval Geometry without Labels}
\author{Anonymous authors}
\fi
\ifdefined\TFAReviewBuild\else
\renewcommand{\iclrruler}[1]{}
\fi
\begin{document}
\raggedbottom
\maketitle
\begin{abstract}
Frozen visual foundation models offer transferable representations for visual
place recognition, but fixed aggregation rules can suppress useful distinctions
when deployed in new environments. We introduce TFA, a reliability-guided,
training-free aggregation method that calibrates frozen representations without
place labels or task-specific weight updates. Our central observation is that
reproducible retrieval does not necessarily imply discriminative retrieval:
independently constructed codebooks can consistently select a small set of
database hubs. TFA combines cross-codebook agreement, retrieval coverage, and
spectral statistics to control residual assignment, spectral shaping, and
global-feature fusion. The database-only variant establishes its rules before
accessing deployment queries; TFA-C64 uses 64 disjoint unlabeled target images
to assess reference-to-query information retention and calibrate decisions for
subsequent queries. Its inner spectral kernel recovers the original descriptor
similarity exactly when intervention strengths vanish. Controlled comparisons
across 20 ground protocols use a fixed DINOv2-B backbone and identical input
resolution. Database-only TFA improves Recall@1 over AnyLoc by 17.39 percentage
points on MSLS-val and 9.55 points on SPED, while comparisons with TFA-C64
show the benefit of the target-aware controller in driving environments. Across
eight aerial/cross-view protocols with DINOv2 and DINOv3, database-only TFA
attains the highest Recall@1 among the compared training-free heads in 14 of
16 backbone--protocol combinations. In a separate native-system comparison,
the DINOv2-G-based 64-image calibration system achieves 91.46\% Recall@1 on
Pitts30k and 76.29\% on VPAIR, exceeding the displayed training-free comparators
across all five evaluated benchmarks. These results demonstrate that
reliability-guided aggregation can recover additional retrieval capability
from frozen representations, providing a practical baseline for new retrieval
environments with scarce place supervision.
\end{abstract}

\begin{figure}[!ht]
 \centering\includegraphics[width=\linewidth]{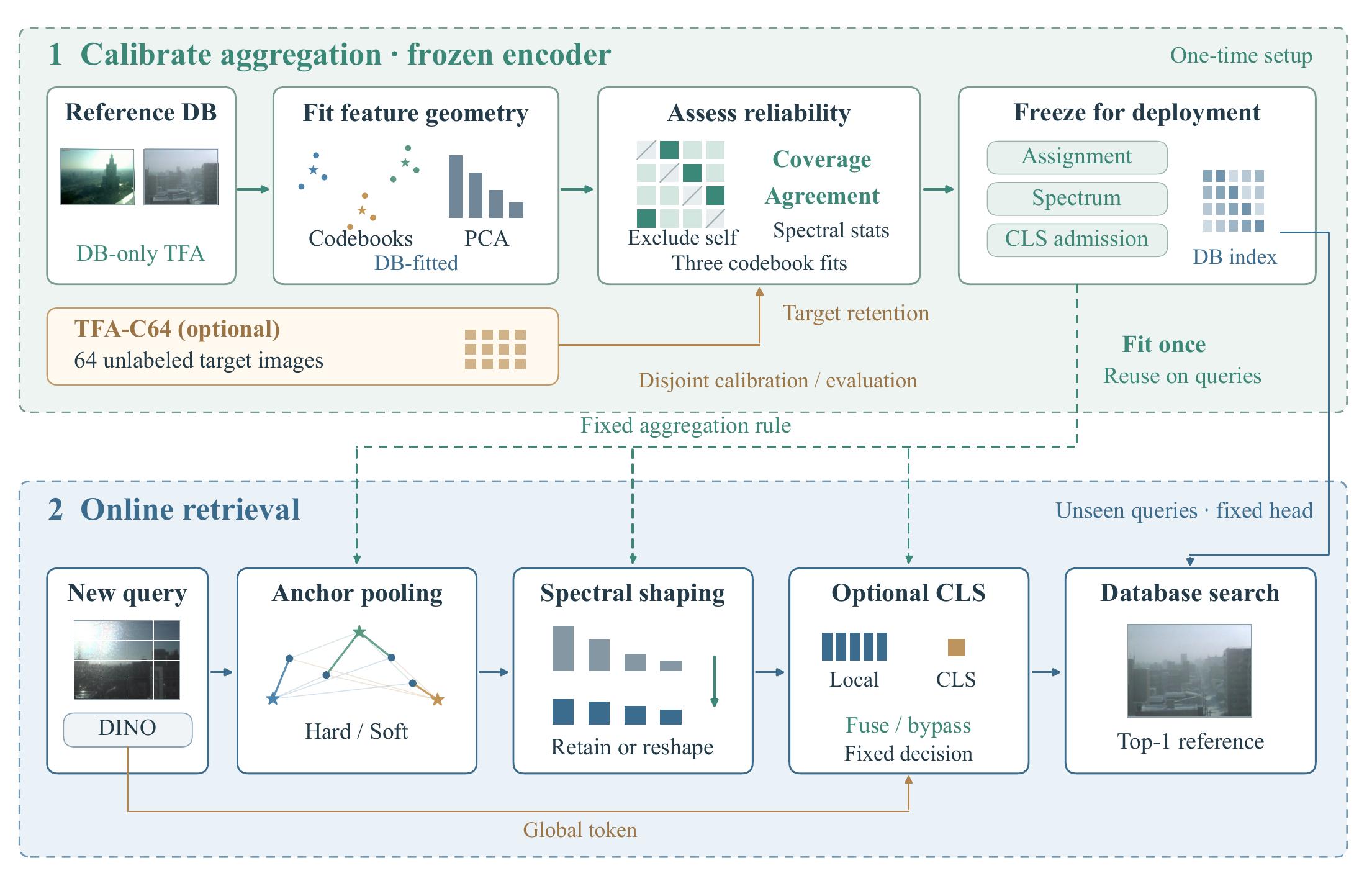}
 \caption{\textbf{TFA overview.} DB-fitted geometry and reliability evidence
 determine a fixed aggregation head. C64 adds 64 disjoint target images for
 calibration. Dashed arrows: fixed controls; solid: data.
 Feature diagrams are schematic; retrieval shows DB-only SPED q55, seed 0.}
 \label{fig:tfa-algorithm-pipeline}
\end{figure}

\section{Introduction}
Visual place recognition retrieves observations of the same location despite
changes in appearance, viewpoint, and sensing conditions. In a new deployment,
the reference database may be available well before representative
place-labeled training pairs. Frozen visual foundation models make this regime
increasingly practical: their dense features can be reused without task-specific
encoder training. AnyLoc demonstrated the cross-environment value of combining
these features with unsupervised VLAD~\citep{anyloc,dinov2}. This raises a
representation-use question: how much retrieval capability can be recovered
from fixed features by adapting their aggregation, without collecting positives
or updating model weights?

Aggregation determines which distinctions in the patch representation survive
as image-level similarity. A patch can contribute to one anchor or several;
dominant descriptor directions can be retained or downweighted; a global token
can supply complementary semantic evidence. A fixed choice need not transfer
between maps. Under the same frozen feature interface, full whitening improves
SPED~\citep{sped} from 76.00 to 85.34 R@1 but reduces
AmsterTime~\citep{amstertime} from 45.71 to 0.76.
Partial whitening is a substantially stronger general control, yet also leaves
map-dependent gains unrecovered. The challenge is therefore to decide which
transformation is supported by a new deployment using unlabeled evidence.

The central challenge is interpreting unlabeled evidence. Agreement
across independently constructed codebooks can reflect reproducible place
structure, or merely the repeated selection of a few database hubs, a known
high-dimensional nearest-neighbor phenomenon~\citep{radovanovic2010hubs}. We call
the latter failure \emph{stable collapse}. Broad retrieval support alone has
the complementary problem: noisy rankings can cover many database images.
Consequently, neither construction stability nor support is sufficient by
itself. Their joint behavior provides evidence for adapting aggregation, while
database-only measurements may still miss a database--query distribution shift.

We introduce TFA, a database-calibrated aggregation head. Reference images
serve as pseudo queries after self and descriptor-equivalent matches are
excluded. Support and cross-codebook agreement control assignment, spectral
carrier selection, and global-token admission. Split-database spectral
statistics determine the strengths of a reversible inner kernel. The
complete head is fixed before processing any deployment query; vocabulary
fitting and PCA use only the reference database.

Our contributions are a database-only reliability controller for frozen
retrieval features, a direct-sum spectral formulation with exact
original-geometry fallback at zero intervention, and a scene-stratified
evaluation of its transfer across representations. Under a common DINOv2-B
interface, TFA improves on all three training-free controls in nine ground
protocols and ties the best in one. Its gains over AnyLoc include 17.39 R@1
points on MSLS-val and 9.55 on SPED. The aerial comparison extends the study
to DINOv3, with fourteen of sixteen backbone--protocol comparisons favoring
TFA over the displayed training-free alternatives.

TFA-C64 extends database-only deployment with 64 disjoint unlabeled target
images; TFA-FullQ serves as an idealized full-query calibration reference.
A separate larger-backbone, dual-vocabulary
TFA-C64 system retains 97.4--99.8\% of its paired full-query R@1 across five
map units. These extensions measure the benefit of target observations
beyond the database-only deployment setting.

\section{Related work}
\paragraph{Learned place representations.}
NetVLAD makes residual aggregation differentiable and learns it with place
supervision~\citep{netvlad}. SALAD reformulates local-feature assignment as
optimal transport, including a dustbin for features left out of the aggregated
representation~\citep{salad}. BoQ instead learns global queries that aggregate
image features through cross-attention~\citep{boq}. SelaVPR++ adapts foundation
models with lightweight modules and combines binary initial retrieval with
floating-point global-feature reranking~\citep{selavpr}. These methods provide
the supervised reference systems in our comparisons. TFA addresses deployment
without task-specific weight optimization, adapting the aggregation rule
through unlabeled statistics rather than learned VPR parameters.

\paragraph{Training-free aggregation and segment retrieval.}
AnyLoc combines off-the-shelf self-supervised features with unsupervised
aggregation, establishing their utility across structured and unstructured
environments~\citep{anyloc}. Its VLAD formulation is the residual-aggregation
reference in our experiments. TF-VPR benchmarks frozen foundation features
and uses training-free graph-attention and cross-attention modules to aggregate
token relationships for place retrieval~\citep{tfvpr}.
Riemannian Invariant Aggregation (RIA)
models second-order patch statistics as covariance descriptors and maps their
positive-definite geometry into a Euclidean representation; it reports both
zero-shot and fine-tuned configurations~\citep{ria}. Our training-free
comparison concerns its zero-shot formulation. Revisit Anything introduces
SegVLAD, which encodes image segments and neighboring-segment groups, then
retrieves partial representations to reduce interference from non-overlapping
image content~\citep{segvlad}. We compare its pretrained-feature configuration,
SegVLAD-PreT, in the complete-system table. These approaches motivate different
aggregation units and statistics. TFA focuses on how unlabeled retrieval
evidence can determine assignment, spectral intervention, and global-token
admission within an image-level residual representation.

\paragraph{Cross-view localization.}
Sample4Geo learns cross-view representations with a symmetric contrastive
objective and hard negatives selected using geographic proximity and feature
similarity~\citep{sample4geo2023}. It is the trained reference in our aerial
comparison, using a released University-1652 checkpoint across target
protocols. This contrasts task-trained cross-view transfer with aggregation
of unchanged foundation features; the two systems retain their respective
encoders and training histories.

\paragraph{Statistical calibration.}
Centering and whitening are established retrieval operations
\citep{jegou2012negative}. TFA contributes evidence for admitting and combining
these operations, including database coverage and cross-codebook agreement.
TFA-C64 additionally estimates target-dependent decisions from a disjoint
unlabeled calibration set. In contrast to test-time or source-free adaptation
that optimizes model parameters~\citep{tent,shot}, the encoder remains fixed
and subsequent queries reuse the calibrated rule. We distinguish this
target-data access from database-only adaptation in the experimental protocol.

\section{Reliability-controlled retrieval geometry}
\subsection{Setting and residual representation}
Let $\mathcal D$ denote the reference database and $N_d=|\mathcal D|$ its size.
For each image $x$, a frozen encoder returns $n_x$ patch tokens
$p_i(x)\in\mathbb R^{d_p}$, $i=1,\ldots,n_x$, and a global token $g(x)$;
$d_p$ is the patch-feature dimension. Image arguments are omitted when clear.
TFA fits its controller exclusively on
$\mathcal D$, without retrieval labels, poses, or deployment queries. Its
statistics and decisions are then fixed for independent query inference.
Throughout the main comparisons, TFA denotes this database-only head.
TFA-C64 additionally uses 64 disjoint unlabeled target images. The two
configurations share residual representations, database-fitted PCA, and the
spectral kernel below; their controllers use evidence suited to their available
observations. Appendix~\ref{sec:core-method} defines the target-aware controller.

Following VLAD~\citep{vlad2010}, fit a vocabulary of $K$ centers,
indexed by $k=1,\ldots,K$, and normalize
its centers to $c_k$. Write $\operatorname{L2}(u)=u/\|u\|_2$ for a nonzero
vector $u$, with zero vectors left zero, and $\tilde p_i=\operatorname{L2}(p_i)$.
For assignment type $a\in\{H,S\}$ (Hard or Soft), let $w^a_{ik}$ be the
weight of patch $i$ at center $k$. Hard gives weight one to the center with
largest inner product and zero to the others; Soft uses
$w^S_{ik}=\exp(\langle\tilde p_i,c_k\rangle/\tau)/
\sum_{l=1}^K\exp(\langle\tilde p_i,c_l\rangle/\tau)$,
where $l$ also indexes centers and $\tau>0$ is the temperature.
Soft assignment follows the weighted-quantization principle~\citep{softassignment2008}.
Both use the same residual coordinates with intra-normalization~\citep{intranorm2013}:
\begin{equation}
 v^a=\operatorname{L2}\!\left[\operatorname{L2}\!\left(
 \sum_{i=1}^{n_x} w^a_{ik}(\tilde p_i-c_k)\right)\right]_{k=1}^{K},
 \qquad a\in\{H,S\}.
\end{equation}
Here $[\cdot]_{k=1}^K$ concatenates the $K$ residual blocks, so
$v^a\in\mathbb R^{d_v}$ with $d_v=Kd_p$. The construction index set is
$\mathcal S=\{0,1,2\}$. These three vocabulary fits, fixed before evaluation,
quantify sensitivity to the anchor bank; their indices are suppressed in
descriptor formulas.

\subsection{Database pseudo-query evidence}
A fixed subset $\mathcal P\subseteq\mathcal D$, containing at most 5,000
images, is retrieved against the full database. Each pseudo query excludes
itself and numerically identical residual descriptors. For each assignment
and spectral carrier, we measure the number of distinct retrieved database
images and pairwise top-1 agreement across the three vocabulary constructions.
Coverage measures whether retrieval distinguishes different reference images;
agreement measures whether that behavior persists across vocabulary fits.
Agreement alone also rewards a collapsed retriever that always returns the
same database image. We therefore use coverage and agreement jointly as
unlabeled evidence, rather than interpreting either statistic as correctness.

Hard assignment is admitted over Soft only when it increases support without
reducing agreement in every construction pair. Otherwise Soft is retained.
The same admission test governs global-token fusion. The statistics and
spectral selection rule are defined below; all decisions precede query access.

\subsection{A reversible inner spectral kernel}
For a fixed assignment and construction, database PCA~\citep{pca2016} provides the mean
$\mu\in\mathbb R^{d_v}$, an orthonormal basis $U\in\mathbb R^{d_v\times r}$,
and positive eigenvalues $\lambda_j$, $j=1,\ldots,r$, where $r$ is the retained
rank. Below $v$ denotes a descriptor for this assignment and construction.
For a spectral exponent $\gamma\in[0,1]$, define
\begin{equation}
 \phi_\gamma(v)=\operatorname{L2}
 \left(\operatorname{diag}_{j=1}^{r}(\lambda_j^{-\gamma/2})U^\top(v-\mu)\right).
\end{equation}
The operator $\operatorname{diag}$ constructs a diagonal matrix.
Numerically unsupported directions are excluded. The fixed controls
$\gamma=0,\frac12,1$ denote projected centered features, partial whitening,
and whitening~\citep{jegou2012negative}. At $\gamma=0$, centering and projection remain active;
original-descriptor cosine is defined separately as $S_0$. Scaling by
$\lambda_j^{-\gamma/2}$ progressively reduces the dominance of high-variance
directions. Low-variance directions can also carry noise, so stronger
whitening need not improve discrimination.

For a query image $q$ and database image $d$, define original similarity
$S_0(q,d)=\langle v_q,v_d\rangle$, centered/projected similarity
$S_c(q,d)=\langle\phi_0(v_q),\phi_0(v_d)\rangle$, and weighted similarity
$S_w(c;q,d)=\langle\phi_c(v_q),\phi_c(v_d)\rangle$; $v_q,v_d$ are their
unit-normalized residual descriptors. The scalar $c\in[0,1]$ controls spectral
shaping and $\beta\in[0,1]$ controls centering. Split-database spectral-energy
agreement and construction dispersion determine both
(Appendix~\ref{sec:db-control}). To preserve an exact route back to the
unprojected descriptor, we mix original, centered, and spectrally weighted
similarities. With $\ell(t)=\sin^2(\pi t/2)$ for
$t\in[0,1]$, and suppressing the image arguments, the inner kernel is
\begin{equation}
 S_A=(1-\ell(c))[(1-\ell(\beta))S_0+\ell(\beta)S_c]
       +\ell(c)S_w(c).
 \label{eq:iclr-inner}
\end{equation}
The smooth map $\ell$ takes $[0,1]$ to $[0,1]$ with endpoints zero and one.
It is a design choice; the preservation property below follows from the
nonnegative mixture and its endpoint weights. This construction uses the
standard closure of kernels under nonnegative sums~\citep{aronszajn1950}.
\begin{proposition}[Scope of the fallback]
For fixed deployment strengths and nonzero normalized branch descriptors,
\cref{eq:iclr-inner} is a direct-sum cosine kernel. At $c=\beta=0$, it equals
$S_0$ exactly, regardless of PCA truncation.
\end{proposition}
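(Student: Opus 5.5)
The plan is to exhibit an explicit feature map whose plain inner product reproduces \cref{eq:iclr-inner}. Fix the deployment strengths $c,\beta\in[0,1]$ and abbreviate $a_0=(1-\ell(c))(1-\ell(\beta))$, $a_c=(1-\ell(c))\ell(\beta)$ and $a_w=\ell(c)$. Because $\ell$ maps $[0,1]$ into $[0,1]$, each coefficient is nonnegative. They also sum to one, since $(1-\ell(c))\bigl[(1-\ell(\beta))+\ell(\beta)\bigr]+\ell(c)=1$.

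For an image with unit residual descriptor $v$, I will define the concatenated map
\[
\Psi(v)=\bigl(\sqrt{a_0}\,v,\ \sqrt{a_c}\,\phi_0(v),\ \sqrt{a_w}\,\phi_c(v)\bigr)\in\mathbb R^{d_v}\oplus\mathbb R^{r}\oplus\mathbb R^{r}.
\]
The inner product of a direct sum is the sum of the componentwise inner products. Hence $\langle\Psi(v_q),\Psi(v_d)\rangle=a_0S_0+a_cS_c+a_wS_w(c)$, which is exactly $S_A$. This is the Aronszajn closure under nonnegative sums, made concrete.

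To justify calling this a cosine kernel, I will use the hypothesis that the branch descriptors are nonzero and normalized. Then $\|v\|=\|\phi_0(v)\|=\|\phi_c(v)\|=1$, so $\|\Psi(v)\|^2=a_0+a_c+a_w=1$. Each block is therefore a unit vector, $\Psi(v)$ is a unit vector in the direct sum, and $S_A$ is the cosine between $\Psi(v_q)$ and $\Psi(v_d)$. This normalization is the only place the nonzero hypothesis enters. If the $\operatorname{L2}$ convention sent a branch to zero, the norm would drop below one and the cosine interpretation would fail, so it must be stated as an assumption.

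For the fallback, I will substitute $c=\beta=0$. Since $\ell(0)=\sin^2 0=0$, this gives $a_0=1$ and $a_c=a_w=0$, so $S_A=S_0$ identically. The point about PCA truncation is that $U$, $r$, $\mu$ and the $\lambda_j$ enter only through $\phi_0$ and $\phi_c$. Those terms carry exact zero coefficients, so every choice of rank or excluded directions gives the same value. Equivalently, $\Psi(v)$ reduces to $(v,0,0)$, an isometric copy of the original descriptor.

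I expect no real obstacle. The only care needed is the well-definedness issue already noted: the kernel is only defined through finite vectors with nonnegative weights. Reading $\sqrt{a}$ as $0$ when $a=0$ handles the vanishing blocks cleanly.
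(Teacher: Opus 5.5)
Your proposal is correct and matches the paper's proof essentially step for step. It uses the same nonnegative weights summing to one, the same concatenated map $\Psi(v)=[\sqrt{a_0}\,v;\sqrt{a_c}\,\phi_0(v);\sqrt{a_w}\,\phi_c(v)]$ with unit norm and inner product $S_A$, and the same observation that $c=\beta=0$ gives weights $(1,0,0)$, so the PCA-dependent branches drop out. One minor wording slip: the blocks of $\Psi(v)$ are $\sqrt{a}$-scaled unit vectors, not unit vectors themselves, and your norm computation in fact uses the scaled form.
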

\begin{proof}
Set $w_0=(1-\ell(c))(1-\ell(\beta))$, $w_c=(1-\ell(c))\ell(\beta)$,
and $w_w=\ell(c)$. All three weights are nonnegative and sum to one.
Define the concatenated feature
\[
\Psi(v)=[\sqrt{w_0}v;\sqrt{w_c}\phi_0(v);\sqrt{w_w}\phi_c(v)].
\]
Its squared norm is $w_0+w_c+w_w=1$, and its inner product is
$w_0S_0+w_cS_c+w_wS_w(c)=S_A$. At $c=\beta=0$ the weights are
$(1,0,0)$, yielding exactly $S_0$ without any requirement on the PCA rank.
\end{proof}
This preservation property applies to the inner kernel. The complete head
subsequently applies outer carrier selection and CLS admission. The proposition
establishes representability and exact fallback, not a guarantee of improved
retrieval accuracy; the selection rules are evaluated experimentally.

\subsection{Support, agreement, and auxiliary evidence}
For a candidate retrieval rule $m$ and construction $s\in\mathcal S$, let
$t_{m,s}(q)$ be the top-1 database index returned for pseudo query
$q\in\mathcal P$. The indicator $\mathbf{1}\{E\}$ equals one when condition
$E$ holds and zero otherwise; $|\cdot|$ denotes set cardinality. Define
\begin{equation}
 U_m=\frac1{|\mathcal S|}\sum_{s\in\mathcal S}|\{t_{m,s}(q):q\in\mathcal P\}|,
 \quad A_m^{ss'}=\frac1{|\mathcal P|}\sum_{q\in\mathcal P}
 \mathbf{1}\{t_{m,s}(q)=t_{m,s'}(q)\}.
\end{equation}
Here $s,s'\in\mathcal S$ are distinct constructions. The first statistic
measures support, the second reproducibility. For candidate rules $m,n$, let
$D_{mn}=\max_{s<s'}(A_m^{ss'}-A_n^{ss'})$, with the maximum over construction
pairs. For spectral selection at a fixed assignment, $m\in\{0,P,A\}$ denotes
original cosine $S_0$, partial whitening $S_P=S_w(1/2)$, or the adaptive
kernel $S_A$, respectively. The deployed carrier is selected in the following
top-to-bottom order ($\land$ means ``and'' and $\lor$ means ``or''). This
decision rule operationalizes the coverage--consistency criterion:
\begin{equation}
 m^*=\begin{cases}
 0,&U_P\le U_0\ \land\ D_{P0}<0,\\
 P,&U_P>U_A\ \lor\ (U_A>U_P\ \land\ D_{AP}<0),\\
 A,&\text{otherwise}.
 \end{cases}
 \label{eq:iclr-selector}
\end{equation}
Thus the original geometry is restored when partial whitening fails both
support and reproducibility. Partial whitening replaces the adaptive carrier
when it expands support, or when adaptive expansion is unanimously less
reproducible. An exact support tie preserves the adaptive carrier.

For two candidate rankings, write $m\succ n$ when $U_m>U_n$ and
$D_{mn}\geq0$. Assignment chooses Hard precisely when its adaptive carrier
satisfies $H\succ S$; here the candidate labels $H,S$ refer to adaptive
retrieval under the two assignments. CLS denotes the encoder's global token
$g$, whose branch score is cosine similarity between normalized global tokens.
CLS is admitted when equal-weight fusion of
database-standardized local and global scores satisfies the same criterion
against the local adaptive score. The scale of each branch is its median
row-wise score standard deviation on database pseudo queries. These scales
and the CLS decision are frozen before outer carrier selection.
Appendix~\ref{sec:db-control} specifies the split statistics and edge conventions.

\section{Experiments}
\subsection{Matched heads and system-level calibration}
\label{sec:eval-protocol}
Training-free heads use DINOv2-B/14~\citep{dinov2} final tokens at $322\times322$ for the
ground comparison, a common database/query protocol, and K64 for VLAD-derived
methods. Vocabulary fitting uses at most 300,000 database patches and spectral
rank is capped at 4,096. Table~\ref{tab:iclr-primary} fixes the backbone;
Table~\ref{tab:iclr-aerial} evaluates both DINOv2-B and DINOv3-B~\citep{dinov3} for every
aerial protocol. Supervised references retain their trained weights.
Appendix~\ref{sec:dataset-sources} lists the original dataset references and
the benchmark releases used for each protocol.
\begin{table}[!t]
\centering\footnotesize
\setlength{\tabcolsep}{2.5pt}
\caption{\textbf{Ground retrieval, R@1 (\%).} Frozen DINOv2-B/14, $322^2$. TFA: DB-only; C64: 64-query calibration (mean$\pm$SD). Protocol: Sec.~\ref{sec:eval-protocol}; dataset sources: Table~\ref{tab:dataset-sources}.}
\label{tab:iclr-primary}
\begin{tabular}{@{}lrrrrrrr@{}}\toprule
 & \multicolumn{2}{c}{Supervised References} & \multicolumn{5}{c}{Training Free Systems}\\
\cmidrule(lr){2-3}\cmidrule(lr){4-8}
Dataset & SALAD & SelaVPR++ & AnyLoc & TF-VPR & RIA$^d$ & TFA & TFA-C64\\\midrule
\rowcolor{gray!15}\multicolumn{8}{l}{\textbf{Urban / ground}}\\
Pitts30k & 92.37$^r$ & 93.21$^r$ & 82.12 & \underline{84.93} & 81.84 & \textbf{86.01} & 84.33$\pm$1.59\\
Pitts250k & 95.08$^r$ & 95.95$^r$ & 82.97 & 82.96 & \underline{84.73} & \textbf{87.01} & 83.70$\pm$1.63\\
MSLS-val & 92.03$^r$ & 93.78$^r$ & 52.70 & 41.49 & 44.41 & \underline{70.09} & \textbf{71.16}$\pm$2.55\\
St-Lucia & 100.00$^r$ & 99.93$^r$ & 92.31 & 88.11 & 85.31 & \textbf{97.61} & \underline{95.72}$\pm$2.87\\
Eynsham & 91.59$^r$ & 92.24$^r$ & 76.45 & 71.15 & 73.13 & \textbf{88.45} & \underline{85.19}$\pm$3.92\\
GSV-Brussels & 95.53$^r$ & 94.48$^r$ & 74.04 & 63.06 & 63.66 & \underline{84.36} & \textbf{84.75}$\pm$0.90\\
Essex3in1 & 90.00$^r$ & 91.43$^r$ & 78.25 & 73.81 & 71.11 & \textbf{82.38} & \underline{82.21}$\pm$0.85\\
V4RL shopping & 94.00$^r$ & 93.53$^r$ & \textbf{89.05} & 82.97 & \underline{86.97} & 69.54 & 86.91$\pm$2.66\\
\rowcolor{gray!15}\multicolumn{8}{l}{\textbf{Condition shift}}\\
SPED & 92.09$^r$ & 90.77$^r$ & 75.57 & 73.48 & 69.85 & \textbf{85.12} & \underline{83.79}$\pm$0.63\\
Nordland & 86.46$^r$ & 94.89$^r$ & 29.50 & 29.46 & 21.03 & \textbf{39.60} & \underline{30.23}$\pm$3.27\\
AmsterTime & 58.57$^r$ & 57.27$^r$ & \underline{45.71} & 41.35 & 36.50 & 44.84 & \textbf{52.00}$\pm$1.76\\
CrossSeason & 100.00$^r$ & 100.00$^r$ & 99.83 & \textbf{100.00} & 99.65 & \textbf{100.00} & \underline{99.97}$\pm$0.08\\
\rowcolor{gray!15}\multicolumn{8}{l}{\textbf{Driving}}\\
4Seasons Business Campus & 98.35$^r$ & 98.65$^r$ & \underline{94.84} & 88.27 & 84.06 & 51.23 & \textbf{95.55}$\pm$0.65\\
4Seasons City Loop & 95.09$^r$ & 98.32$^r$ & \underline{78.25} & 62.79 & 74.03 & 60.59 & \textbf{82.88}$\pm$0.77\\
4Seasons Countryside & 73.53$^r$ & 85.06$^r$ & 28.05 & 32.37 & \underline{35.12} & 12.58 & \textbf{36.76}$\pm$5.75\\
4Seasons Office Loop & 99.16$^r$ & 98.88$^r$ & \textbf{90.50} & 78.51 & 79.49 & 7.91 & \underline{90.20}$\pm$0.75\\
4Seasons Old Town & 88.96$^r$ & 86.94$^r$ & \textbf{70.24} & 64.54 & 65.58 & 9.75 & \underline{70.20}$\pm$1.87\\
4Seasons Parking Garage & 100.00$^r$ & 100.00$^r$ & \textbf{99.89} & 99.01 & 99.23 & 97.21 & \underline{99.30}$\pm$0.28\\
RobotCar nine conditions & 92.84$^r$ & 92.39$^r$ & \underline{87.82} & 78.93 & 86.63 & 69.46 & \textbf{88.33}$\pm$0.46\\
\rowcolor{gray!15}\multicolumn{8}{l}{\textbf{Indoor}}\\
Baidu-Mall & 68.41$^r$ & 67.98$^r$ & \underline{63.26} & 51.22 & 58.04 & 56.14 & \textbf{63.55}$\pm$1.71\\
\bottomrule\end{tabular}
\par\smallskip\raggedright\scriptsize $d$: RIA reconstruction; $r$: checkpoint inference. Bold/underline: best/second-best training-free result (C64 mean). C64 excludes calibration images; other columns use full query sets. RobotCar uses 64 calibration images across nine conditions.
\end{table}
\begin{table}[!t]
\centering\footnotesize
\caption{\textbf{Aerial / cross-view retrieval, R@1/5/10 (\%).} TFA: DB-only. Bold/underline rank training-free heads within each backbone. Calibration variants: Table~\ref{tab:iclr-aerial-access}; dataset sources: Table~\ref{tab:dataset-sources}.}
\label{tab:iclr-aerial}
\setlength{\tabcolsep}{0pt}\fontsize{8.4}{10}\selectfont
\begin{tabular}{@{}l@{\hspace{6pt}}r@{\hspace{2pt}}r@{\hspace{2pt}}r*{4}{@{\hspace{7pt}}r@{\hspace{2pt}}r@{\hspace{2pt}}r}@{}}\toprule
 & \multicolumn{3}{c}{Trained Reference} & \multicolumn{12}{c}{Training Free Systems}\\
\cmidrule(lr){2-4}\cmidrule(lr){5-16}
 & \multicolumn{3}{c}{Sample4Geo} & \multicolumn{3}{c}{AnyLoc} & \multicolumn{3}{c}{TF-VPR} & \multicolumn{3}{c}{RIA$^d$} & \multicolumn{3}{c}{TFA}\\
\cmidrule(lr){2-4}\cmidrule(lr){5-7}\cmidrule(lr){8-10}\cmidrule(lr){11-13}\cmidrule(lr){14-16}
Backbone & R@1 & R@5 & R@10 & R@1 & R@5 & R@10 & R@1 & R@5 & R@10 & R@1 & R@5 & R@10 & R@1 & R@5 & R@10\\\midrule
\rowcolor{gray!15}\multicolumn{16}{l}{\textbf{UAV-VisLoc}}\\
DINOv2-B & \multirow{2}{*}{10.44$^r$} & \multirow{2}{*}{21.33} & \multirow{2}{*}{27.16} & 12.98 & \underline{27.61} & \underline{34.89} & 6.02 & 14.96 & 20.20 & \underline{13.16} & 26.27 & 32.64 & \textbf{15.63} & \textbf{34.02} & \textbf{41.59}\\
DINOv3-B &  &  &  & \underline{14.38} & \underline{30.31} & \underline{38.25} & 10.52 & 23.34 & 29.54 & 12.76 & 23.49 & 29.29 & \textbf{19.94} & \textbf{37.80} & \textbf{45.17}\\
\addlinespace[2pt]
\rowcolor{gray!15}\multicolumn{16}{l}{\textbf{VPAIR-full}}\\
DINOv2-B & \multirow{2}{*}{37.66$^r$} & \multirow{2}{*}{49.59} & \multirow{2}{*}{56.25} & \underline{32.20} & \underline{50.39} & \underline{58.62} & 23.73 & 41.61 & 49.93 & 29.76 & 44.53 & 51.43 & \textbf{41.20} & \textbf{61.64} & \textbf{69.96}\\
DINOv3-B &  &  &  & 33.96 & 50.48 & 58.36 & \underline{36.47} & \underline{55.36} & \underline{64.19} & 26.55 & 39.27 & 46.54 & \textbf{42.10} & \textbf{62.47} & \textbf{71.24}\\
\addlinespace[2pt]
\rowcolor{gray!15}\multicolumn{16}{l}{\textbf{SUES-200}}\\
DINOv2-B & \multirow{2}{*}{80.29$^r$} & \multirow{2}{*}{94.38} & \multirow{2}{*}{97.14} & 48.75 & 76.18 & 85.54 & 51.22 & 75.89 & 84.77 & \underline{54.23} & \underline{78.97} & \underline{87.16} & \textbf{58.95} & \textbf{81.05} & \textbf{87.43}\\
DINOv3-B &  &  &  & 59.28 & 82.40 & 89.25 & \underline{63.24} & \underline{85.46} & \underline{91.09} & 49.95 & 72.73 & 81.65 & \textbf{73.67} & \textbf{91.43} & \textbf{95.66}\\
\addlinespace[2pt]
\rowcolor{gray!15}\multicolumn{16}{l}{\textbf{DenseUAV}}\\
DINOv2-B & \multirow{2}{*}{23.68$^r$} & \multirow{2}{*}{47.53} & \multirow{2}{*}{57.53} & 5.48 & 17.90 & 26.74 & \underline{7.16} & \underline{24.62} & \underline{36.72} & 6.05 & 19.68 & 29.73 & \textbf{8.84} & \textbf{26.93} & \textbf{39.47}\\
DINOv3-B &  &  &  & 4.26 & 16.06 & 24.82 & \underline{7.21} & \underline{23.98} & \underline{35.82} & 5.95 & 19.23 & 28.47 & \textbf{8.19} & \textbf{27.17} & \textbf{39.45}\\
\addlinespace[2pt]
\rowcolor{gray!15}\multicolumn{16}{l}{\textbf{Park}}\\
DINOv2-B & \multirow{2}{*}{26.52$^r$} & \multirow{2}{*}{62.48} & \multirow{2}{*}{75.62} & \underline{34.49} & \underline{70.29} & \underline{82.50} & 33.19 & 69.66 & 79.98 & 29.56 & 66.28 & 78.83 & \textbf{38.98} & \textbf{77.72} & \textbf{88.42}\\
DINOv3-B &  &  &  & \textbf{46.12} & \underline{76.42} & \underline{85.23} & 38.61 & 71.67 & 81.49 & 33.74 & 70.16 & 80.27 & \underline{46.02} & \textbf{78.22} & \textbf{86.59}\\
\addlinespace[2pt]
\rowcolor{gray!15}\multicolumn{16}{l}{\textbf{Urbanscape}}\\
DINOv2-B & \multirow{2}{*}{67.83$^r$} & \multirow{2}{*}{87.43} & \multirow{2}{*}{90.98} & \underline{76.95} & \underline{92.56} & \underline{95.38} & 61.24 & 82.62 & 88.82 & 69.71 & 86.13 & 91.03 & \textbf{84.93} & \textbf{95.68} & \textbf{97.25}\\
DINOv3-B &  &  &  & \underline{82.41} & \underline{93.48} & \underline{95.43} & 69.70 & 87.87 & 91.83 & 74.51 & 88.29 & 92.27 & \textbf{87.27} & \textbf{95.12} & \textbf{96.43}\\
\addlinespace[2pt]
\rowcolor{gray!15}\multicolumn{16}{l}{\textbf{University drone$\to$sat.}}\\
DINOv2-B & \multirow{2}{*}{92.00$^r$} & \multirow{2}{*}{97.55} & \multirow{2}{*}{98.13} & 27.59 & 43.01 & 50.01 & 22.08 & 36.56 & 43.70 & \underline{31.06} & \underline{49.01} & \underline{57.22} & \textbf{44.29} & \textbf{61.81} & \textbf{68.78}\\
DINOv3-B &  &  &  & \underline{42.49} & \underline{65.64} & \underline{73.94} & 41.16 & 63.46 & 71.49 & 39.50 & 58.17 & 65.66 & \textbf{46.81} & \textbf{67.10} & \textbf{74.03}\\
\addlinespace[2pt]
\rowcolor{gray!15}\multicolumn{16}{l}{\textbf{University sat.$\to$drone}}\\
DINOv2-B & \multirow{2}{*}{95.15$^l$} & \multirow{2}{*}{97.00} & \multirow{2}{*}{97.29} & 45.89 & 58.54 & 62.53 & 36.09 & 51.93 & 58.63 & \textbf{66.76} & \textbf{78.08} & \textbf{81.98} & \underline{53.54} & \underline{65.05} & \underline{70.66}\\
DINOv3-B &  &  &  & 71.14 & 81.46 & 85.92 & 62.62 & 77.18 & 81.03 & \underline{82.31} & \underline{90.16} & \underline{92.06} & \textbf{83.93} & \textbf{92.34} & \textbf{94.63}\\
\addlinespace[2pt]
\bottomrule\end{tabular}
\par\smallskip\raggedright\scriptsize UAV-VisLoc: coverage-qualified 100m recall; Park/Urbanscape: 50m XY; others: registered identities/positives. SUES averages four heights. Park AnyLoc uses seed0; Urbanscape AnyLoc uses seed0 for D2 and three seeds for D3. TFA/RIA use three replicas.
\par\scriptsize Sample4Geo: University-trained ConvNeXt-B, $384^2$, shared across backbone rows; $r$: checkpoint inference; $l$: author log. Native interfaces: Appendix~\ref{sec:native-interfaces}.
\end{table}

TFA uses database-only calibration; C64 fits the query-calibrated controller
on 64 unlabeled images and evaluates the disjoint remainder.
The variants share residual representations and the spectral kernel, while
their assignment and CLS decision rules differ. Their comparison assesses
deployed heads; it does not isolate query access alone.
Throughout, mean$\pm$SD denotes the mean and sample standard deviation across
ten random splits, averaging three codebook recalls within each split.
Bold/underline rank reported means; calibration access and evaluation subsets
may differ. TFA-FullQ uses all unlabeled queries. Appendix
Tables~\ref{tab:full-query-reference} and~\ref{tab:iclr-aerial-access}
compare these variants; controller definitions are in Appendices
\ref{sec:db-control}--\ref{sec:core-method}.

The database-only head exceeds AnyLoc, TF-VPR, and RIA on nine ground
protocols and ties the best of these baselines on CrossSeason. Its driving results expose
a limitation of database pseudo-query evidence: on Office Loop and Old Town,
R@1 falls to 7.91 and 9.75, respectively. The query-calibrated variants avoid
much of this degradation. Database consistency therefore provides useful
adaptation evidence but does not ensure transfer to incoming queries.
See Appendix~\ref{sec:office-loop-failure} for the Office Loop failure analysis.

\paragraph{Which head benefits from a new representation?}
Switching to DINOv3 improves SUES R@1 by 14.72 points for TFA and 10.53 for
AnyLoc. On University satellite-to-drone, TFA gains 30.39 points versus
15.55 for RIA: RIA leads with DINOv2-B, TFA with DINOv3-B. Thus representation
gains depend on the aggregation head and scene. Supplementary ground
comparisons use TFA-FullQ.
\paragraph{Trained reference systems.}
SALAD and SelaVPR++ provide ground references; Sample4Geo provides aerial
context using one University-trained checkpoint without target fine-tuning.
Sample4Geo is stronger on SUES-200, DenseUAV, and University; TFA is stronger
on UAV-VisLoc, VPAIR, Park, and Urbanscape. These native-system comparisons
show scene-dependent transfer under distinct training and feature interfaces.

Figure~\ref{fig:iclr-qualitative} compares database-calibrated TFA and baseline retrievals
across five scene families.

\begin{figure}[t]
 \centering\includegraphics[width=\linewidth]{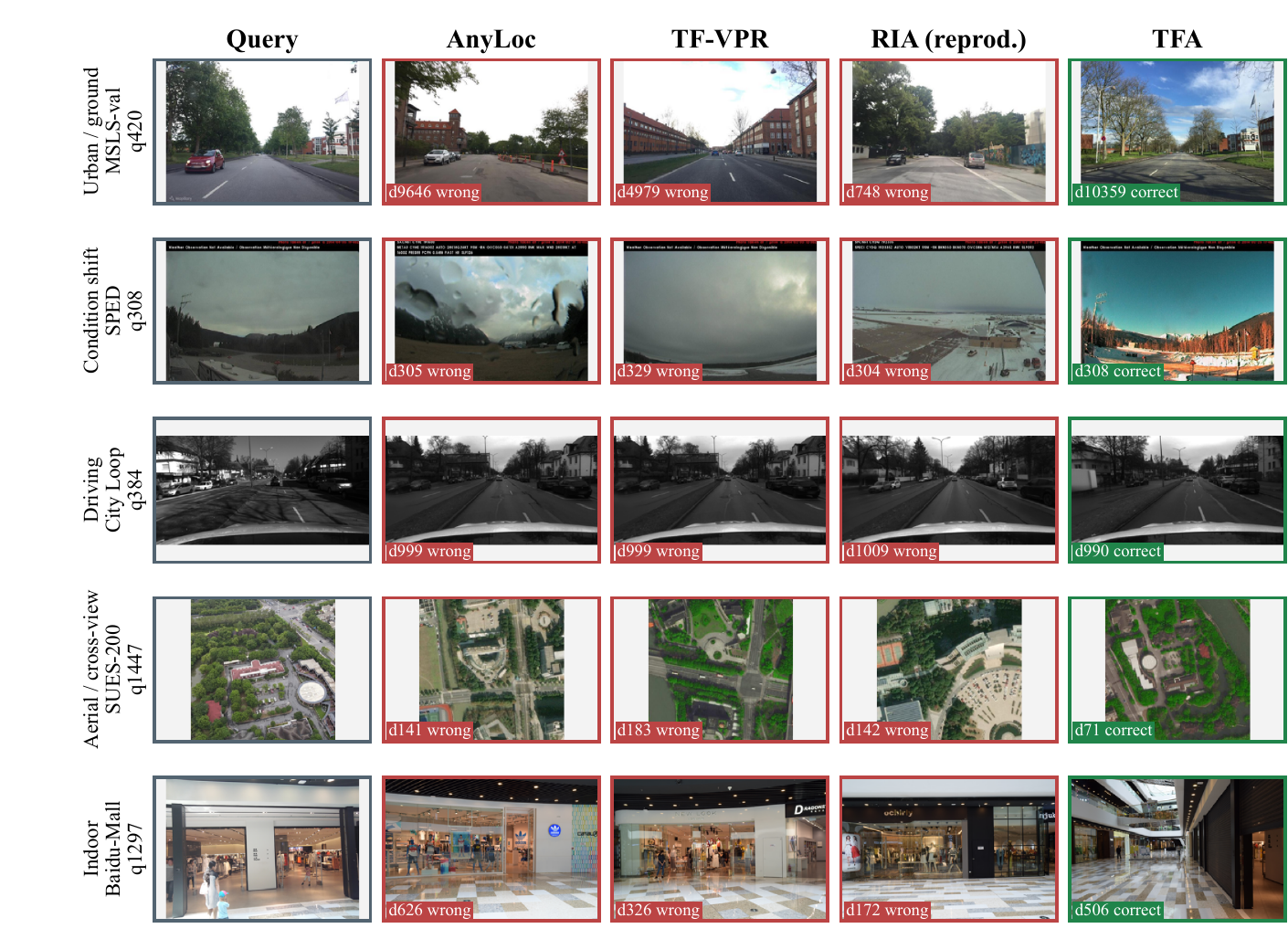}
 \caption{Selected Top-1 retrievals across five scene families.
 DINOv2-B, seed 0, DB-only TFA; green/red denote correct/incorrect matches.
 Each selected query is recovered by TFA but missed by all three baselines.
 RIA uses our reconstruction. Aggregate results: Tables~\ref{tab:iclr-primary}
 and~\ref{tab:iclr-aerial}.}
 \label{fig:iclr-qualitative}
\end{figure}

Additional successes and complementary failures across the same five scene
families appear in Appendix Figures~\ref{fig:supp-retrieval-success}
and~\ref{fig:supp-retrieval-failure}.

\subsection{Small disjoint calibration sets}
The TFA-C64 system combines DINOv2-G L31 value features at $640\times480$,
source/map vocabularies, and target-score calibration. Statistics are fitted
on the disjoint calibration partition and fixed for subsequent queries.
\begin{table}[t]
\centering\footnotesize
\setlength{\tabcolsep}{2.6pt}
\caption{\textbf{Native-system comparison, R@1 (\%).} TFA uses the dual-vocabulary DINOv2-G L31 value configuration at $640\times480$. TFA-C$N$: $N$ disjoint calibration images (mean$\pm$SD); TFA-FullQ (Cross-fit): two-fold query cross-fitting.}
\label{tab:iclr-strongest}
\begin{tabular}{@{}llrrrrr@{}}\toprule
System & Backbone/interface & Pitts30k & SPED & St-Lucia & SVOX-5 & VPAIR-full\\\midrule
\rowcolor{gray!15}\multicolumn{7}{l}{\textbf{Supervised References}}\\
BoQ$^p$ & native &93.7&92.5&100&98.38&29.3\\
SALAD$^p$ & DINOv2-B &92.4&92.1&100&97.66&22.1\\
SelaVPR++ & DINOv2-L (perf.) &94.4$^p$&92.75$^r$&100.00$^r$&98.54$^r$&42.39$^r$\\
\rowcolor{gray!15}\multicolumn{7}{l}{\textbf{Training Free Systems}}\\
TF-VPR$^p$ & DINOv2-B &84.3&77.6&90.8&58.54&67.5\\
AnyLoc & DINOv2-G L31 &\underline{87.7}$^p$&85.50$^r$&96.2$^p$&85.84$^r$&66.7$^p$\\
SegVLAD-PreT & G + SAM-H &86.70$^p$&\underline{88.63}$^r$&96.79$^r$&\underline{86.09}$^r$&\underline{69.80}$^p$\\
RIA & DINOv2-G L31 &86.70$^p$&68.37$^r$&\underline{97.20}$^p$&41.71$^r$&45.23$^r$\\
TFA-C64 & DINOv2-G L31 &\textbf{91.46}$\pm$.27&\textbf{91.01}$\pm$.69&\textbf{99.26}$\pm$.15&\textbf{91.88}$\pm$.34&\textbf{76.29}$\pm$.39\\
\midrule
\rowcolor{gray!10}\multicolumn{7}{l}{Calibration budget for the dual-vocabulary system}\\
\quad TFA-C8 & &87.91$\pm$.86&87.48$\pm$.48&97.12$\pm$.57&85.65$\pm$.48&62.29$\pm$1.66\\
\quad TFA-C16 & &90.24$\pm$.49&89.64$\pm$.60&98.51$\pm$.29&90.37$\pm$.45&71.90$\pm$1.07\\
\quad TFA-C32 & &90.85$\pm$.37&90.57$\pm$.69&99.05$\pm$.21&91.54$\pm$.36&74.68$\pm$.54\\
\quad TFA-C128 & &91.74$\pm$.19&91.57$\pm$.71&99.33$\pm$.14&92.09$\pm$.29&76.99$\pm$.50\\
\quad TFA-C256 & &91.88$\pm$.24&91.93$\pm$1.16&99.42$\pm$.14&92.06$\pm$.26&77.58$\pm$.48\\
\quad TFA-FullQ (Cross-fit) & &92.09&93.08&99.50&92.15&78.31\\
\bottomrule
\end{tabular}
\par\smallskip\raggedright\scriptsize $p$: published; $r$: our inference of comparison methods. Bold/underline: best/second-best training-free system at TFA-C64; additional budgets are unranked. SVOX: five-condition macro. TFA-C$N$ excludes calibration images; other methods use native query sets. Interfaces and RIA reconstruction details: Appendix~\ref{sec:native-interfaces}.
\end{table}

Table~\ref{tab:iclr-strongest} compares native systems and calibration budgets.
TFA-C64 is competitive on conventional benchmarks and strong on VPAIR.
Calibration-size curves appear in Appendix Figure~\ref{fig:iclr-calibration}.
The single-vocabulary B head is separately tested with identical inputs and
paired evaluation subsets in Appendix~\ref{sec:paired-single-calibration}.

Across the five map units, 64 calibration images preserve 97.41--99.76\% of
paired full-query absolute R@1. Complementarily, net-gain recovery measures
the retained improvement over the uncalibrated system; it is 50.3\% on SPED.
SVOX uses condition-specific calibration.
\begin{table}[t]
\centering\footnotesize
\setlength{\tabcolsep}{3pt}
\renewcommand{\arraystretch}{1.08}
\caption{Spectral controls and complete heads, R@1 (\%). Controls fix the TFA-FullQ assignment; adaptive spectrum is $S_A$ before selection and CLS. C64: mean$\pm$SD. Bold/underline: best/second-best.}
\label{tab:iclr-spectral}
\label{tab:iclr-mapfold}
\begin{tabular}{@{}lrrrrrrr@{}}\toprule
 & \multicolumn{4}{c}{Spectral controls (TFA-FullQ assignment)}
 & \multicolumn{3}{c}{Complete heads}\\
\cmidrule(lr){2-5}\cmidrule(lr){6-8}
Map & Original & Partial & Full & Adaptive & TFA & TFA-C64 & TFA-FullQ\\
 & cosine & whitening & whitening & spectrum & & &\\\midrule
\rowcolor{gray!12}\multicolumn{8}{l}{(a) Fixed transformations have map-dependent effects}\\
Pitts30k & 83.97 & \underline{86.00} & 85.00 & 85.64 & \textbf{86.01} & 84.33$\pm$1.59 & \underline{86.00}\\
SPED & 76.00 & 83.31 & \textbf{85.34} & 85.01 & \underline{85.12} & 83.79$\pm$0.63 & 85.01\\
AmsterTime & 45.71 & \underline{52.69} & 0.76 & 48.87 & 44.84 & 52.00$\pm$1.76 & \textbf{52.72}\\\midrule
\rowcolor{gray!12}\multicolumn{8}{l}{(b) 4Seasons: maps held out from the TFA-FullQ outer-selector design}\\
City Loop & 80.66 & \textbf{83.94} & 56.16 & 82.60 & 60.59 & \underline{82.88}$\pm$0.77 & \textbf{83.94}\\
Office Loop & \underline{90.50} & 89.09 & 7.87 & \textbf{90.54} & 7.91 & 90.20$\pm$0.75 & \textbf{90.54}\\
Old Town & \textbf{70.71} & 67.39 & 7.62 & \textbf{70.71} & 9.75 & \underline{70.20}$\pm$1.87 & \textbf{70.71}\\
Parking Garage & \textbf{99.51} & 98.36 & 96.99 & 97.59 & 97.21 & \underline{99.30}$\pm$0.28 & \textbf{99.51}\\\midrule
Four-map mean & 85.34 & 84.69 & 42.16 & 85.36 & 43.87 & \underline{85.64}$\pm$0.51 & \textbf{86.17}\\\bottomrule
\end{tabular}
\par\smallskip\raggedright\scriptsize C64 excludes calibration images; other columns use full query sets.
Spectral controls fix the assignment; complete heads compare deployed configurations with different controllers and calibration access. TFA-FullQ rejects CLS in panel (b); its maps were held out from outer-selector design. C64 macro statistics average maps within each split.

\end{table}

\paragraph{Calibration transfers to held-out queries.}
With the feature interface fixed, database calibration gives
84.37/96.04/50.71 R@1 on Pitts30k/St-Lucia/VPAIR, versus 92.09/99.50/78.31
with even/odd query cross-fitting. Each query is excluded from its own
statistics; cross-fitting remains within 0.29/0.18/0.17 points of full-query
calibration. This isolates the benefit of target statistics in that system.

\subsection{Why the operator decisions matter}
Table~\ref{tab:iclr-spectral} shows why a fixed spectral transformation is
insufficient: whitening improves SPED but collapses AmsterTime. Under
query-calibrated control, support--agreement selection retains the adaptive
spectrum on SPED and partial whitening on Pitts30k and AmsterTime.
On the four held-out 4Seasons maps, the TFA-FullQ outer selector matches the
best displayed spectral candidate, whereas database-only calibration can
fail to transfer (Appendix~\ref{sec:office-loop-failure}).
Assignment and CLS provide complementary controls: single-anchor assignment
helps the Nardo-D3 stress case but hurts a City Loop split; unconditional
CLS fusion degrades eight of 17 development cells, while the query-aware
retention veto improves the mean by 2.17 points over the local branch.
Appendix~\ref{sec:component-analysis} gives the component-specific protocols
and evidence ablations.

\subsection{Deployment cost}
On RTX 3090, the matched AmsterTime benchmark measures 12.468 ms for the
DINOv2-B encoder and 1.223 ms for the TFA-FullQ head, including Top-10 search;
the head adds 0.410 ms over single-anchor VLAD. Separate H100 measurements
quantify database-only and C64 setup and retrieval costs.
Appendix~\ref{sec:deployment-timing} specifies both timing protocols.

\section{Discussion and conclusion}
TFA adapts frozen-feature aggregation using only an unlabeled reference
database. Its support--agreement controller and reversible spectral kernel
provide a practical baseline for new retrieval environments without task-specific
weight training. The aerial comparisons show strong performance across two
foundation models; the ground results reveal that database consistency can
misrepresent deployment conditions. Query-calibrated variants quantify the
benefit of target observations under an explicitly different access protocol.
The distinction between reference reliability and transfer reliability is
central to further progress in training-free retrieval.

\section*{AI use statement}
Generative AI tools assisted with methodological discussions, experimental
design, mathematical exposition, implementation and debugging, interpretation
of results, literature discovery, figure preparation, and manuscript editing.
Reported retrieval metrics are computed by evaluation programs from model
predictions and dataset annotations. The authors are responsible for the
accuracy, originality, and reproducibility of the final submission.

\bibliography{refs,scene_refs,source_refs}
\bibliographystyle{iclr2027_conference}
\clearpage\appendix
\section{Database-only controller}
\label{sec:db-control}
TFA calibrates its aggregation head on the reference database and then
processes incoming queries independently.
Each assignment uses three database-fitted K64 vocabularies, temperature
$\tau=0.01$ for Soft, unit-normalized centers, and the residual definition
in the main text. PCA retains at most 4,096 numerically supported directions.
The fixed sample used to fit Pitts250k PCA contains 5,000 database images;
retrieval still uses the full gallery.

\paragraph{Split-database spectral statistics.}
A common database-fitted PCA basis defines the coordinates for measuring
construction stability. A seed-0 permutation partitions the database into alternating halves
$\mathcal D_0,\mathcal D_1$, indexed by $h\in\{0,1\}$.
For construction $s\in\mathcal S$, $U_s,\mu_s$ are its PCA basis and mean,
$v_d$ is the normalized residual descriptor of database image $d$, and $j$
indexes a retained PCA coordinate. These statistics are computed separately
for each assignment.
Let $e_{s,h}$ be the vector with entries
$e_{s,h,j}=|\mathcal D_h|^{-1}\sum_{d\in\mathcal D_h}
[U_s^\top(v_d-\mu_s)]_j^2$.
Let $\operatorname{corr}$ denote Pearson correlation and
$\operatorname{clip}(x,a,b)=\min(b,\max(a,x))$. Define
$\rho_s=\operatorname{corr}(e_{s,0},e_{s,1})$ and
$\alpha_s=\operatorname{clip}(\rho_s,0,1)^{2\mathrm{PR}^u_s}$,
where $\mathrm{PR}^u_s=(\sum_j\nu_{s,j})^2/\sum_j\nu_{s,j}^2$ and
$\nu_{s,j}$ are the eigenvalues of the uncentered database Gram matrix
formed from the normalized descriptors. This participation ratio measures
spectral dimensionality~\citep{participation2017}.
The correlation measures agreement of per-direction energies. Raising its
clipped value to $2\mathrm{PR}^u_s$ makes the score more conservative when
energy is spread over many directions: at any correlation strictly between
zero and one, increasing the exponent lowers the score. The exponent is a
fixed reliability heuristic, not a probability derived from a noise model.
Undefined correlations are set to zero.
For a vector $x=(x_0,x_1,x_2)$ of construction-level values, define
$\operatorname{MAD}(x)=\operatorname{median}_s|x_s-\operatorname{median}(x)|$.
The normal-consistent MAD scale follows~\citet{mad1993}.
Let $\Phi$ be the standard normal cumulative distribution function and
$\Phi^{-1}$ its quantile function. The dispersion-penalized statistic is
\begin{equation}
 L(x)=\operatorname{clip}\left[
 \operatorname{median}(x)-
 \frac{\Phi^{-1}(.975)\operatorname{MAD}(x)}
 {\Phi^{-1}(.75)\sqrt{3}},0,1\right].
\end{equation}
Here $L$ is a dispersion-penalized reliability score with a normal-reference
scale. With only three dependent vocabulary fits, it is not a calibrated
95\% confidence bound. Its role is to reduce intervention when constructions
disagree. Let $\mu_{s,h}$ be the mean normalized
descriptor in half $h$, and let $\alpha=(\alpha_s)_{s\in\mathcal S}$ and
$\rho=(\rho_s)_{s\in\mathcal S}$. Then
\begin{equation}
 c=L(\alpha),\qquad
 \beta=\operatorname{clip}\left[
 \operatorname{median}_s
 \frac{\|\mu_{s,0}-\mu_{s,1}\|_2}{\max(\|\mu_s\|_2,10^{-30})}
 (1-L(\rho)),0,1\right].
\end{equation}
These strengths enter Eq.~\ref{eq:iclr-inner}.
Thus $c$ increases with reproducible spectral energy, while $\beta$ requires
both a mean displacement and reduced energy agreement. These definitions
specify how evidence sets intervention strength; the kernel's algebraic
fallback property holds independently of this particular scoring rule.

\paragraph{Pseudo-query decisions.}
All database images are used as pseudo queries when $N_d\leq5,000$;
otherwise a fixed seed-0 subset of 5,000 is used. The gallery is unchanged.
We exclude self-matches and numerical duplicates, identified by cosine
similarity within $5\times10^{-13}$ of one in float64 arithmetic.
Hard is selected only if its adaptive score increases distinct-image support
and has a nonnegative agreement change in at least one codebook pair;
otherwise Soft is selected. Original cosine, partial whitening, and the
adaptive kernel then enter Eq.~\ref{eq:iclr-selector}.

For CLS admission, the local adaptive score and CLS cosine are separately
divided by their median pseudo-query row standard deviations, computed
over unmasked gallery entries with population normalization. Their equal
mixture is admitted by the same support--agreement test against the local
adaptive score. These two scales and the admission decision are fixed before
replacing the local carrier. Denote these fixed scales by
$\sigma_{\rm local}$ and $\sigma_{\rm CLS}$, the selected local score by
$S_{m^*}$, and global-token cosine by $S_{\rm CLS}$.
At inference an admitted mixture is
$\tfrac12(S_{m^*}/\sigma_{\rm local}+S_{\rm CLS}/\sigma_{\rm CLS})$;
otherwise the selected local score is used. Exact ranking ties use ascending
database index. Reported recalls average the three constructions.

\paragraph{Target-domain calibration.}
When 64 unlabeled target images are available, TFA-C64 measures transfer
from the database to this disjoint calibration set. It shares the residual
representation and spectral kernel with TFA, but uses capacity--relation
assignment and quality-weighted CLS fusion with a retention test
(Appendix~\ref{sec:core-method}). Thus the two deployed variants differ in
controller as well as calibration access. All decisions are fixed before
evaluating subsequent queries.

\paragraph{Failure case: within-database reliability does not ensure transfer.}
\label{sec:office-loop-failure}
On 4Seasons Office Loop, database-only TFA selects Soft assignment, the
adaptive spectrum with $c=0.9814$, and CLS fusion, obtaining 7.91\% R@1.
Replacing only the spectral score by original cosine, while retaining the
assignment and CLS decision and scales, recovers 90.54\%; fixed partial
whitening reaches 89.28\%. This intervention identifies spectral shaping as
the principal source of the degradation despite favorable database support
and cross-codebook agreement. In contrast, TFA-C64 estimates
$c\in[0.084,0.296]$ for its selected assignment across ten splits and obtains
90.20$\pm$0.75\% on the disjoint evaluation complements. Six splits retain
the adaptive kernel, two select original cosine, and two select partial
whitening. In one adaptive split, the spectral mixture weight falls from
99.91\% for database-only TFA to 1.73\%, preserving predominantly original
similarity. C64 also changes assignment and rejects CLS in all ten splits;
their individual contributions are not isolated by this comparison.
The failure demonstrates that reproducible database discrimination can
favor a spectral geometry that does not transfer to target observations;
target-domain calibration can substantially reduce the resulting intervention.

\section{TFA-C64: calibration for independent query inference}
\label{sec:core-method}
TFA-C64 uses 64 unlabeled target-domain images to calibrate a fixed head
for subsequent retrieval. Vocabulary fitting and PCA remain database-only;
target images determine assignment, spectral strengths, and CLS admission.
Calibration images are excluded from evaluation, and evaluation queries
do not update the head.

We use the dimensions, normalization, assignment labels, and construction set
defined in the main text; bold vectors denote the same quantities.
Let $\mathcal Q$ be the disjoint unlabeled calibration set, $N_q=|\mathcal Q|=64$, and
$N_d=|\mathcal D|$. All query statistics below refer to calibration images
in $\mathcal Q$; database-index sums run
over $j=1,\ldots,N_d$, unless PCA coordinates are specified.
The positive scalar $\epsilon$ denotes a numerical stabilizer in the stated
statistic. Clipping, MAD, and the standard-normal quantile $\Phi^{-1}$ follow
Appendix~\ref{sec:db-control}; $\mathbf{1}\{E\}$ is the indicator of event $E$.

\subsection{Residual hypotheses}
For each map, row-normalized database patch tokens are used to fit a visual
vocabulary $\mathcal C=\{\mathbf c_k\}_{k=1}^{K}$.  A predefined set of
independent codebook constructions $s\in\mathcal S$ quantifies vocabulary
uncertainty.

Database-only K-means~\citep{lloyd1982} gives fitted centers $\boldsymbol\mu_k$. We use
spherical residual coordinates $\mathbf c_k=\operatorname{L2}(\boldsymbol\mu_k)$
for both assignment and subtraction. For a normalized patch
$\tilde{\mathbf p}_i$, we then form single-anchor (Hard) and multi-anchor
(Soft) assignments
\begin{align}
 w^{H}_{ik} &= \mathbf{1}\!\left\{k=\arg\max_{1\leq l\leq K}
   \langle\tilde{\mathbf p}_i,\mathbf c_l\rangle\right\},\\
 w^{S}_{ik} &=
 \frac{\exp(\langle\tilde{\mathbf p}_i,\mathbf c_k\rangle/\tau)}
 {\sum_l\exp(\langle\tilde{\mathbf p}_i,\mathbf c_l\rangle/\tau)},
 \qquad \tau>0 .
 \label{eq:assignment-definitions}
\end{align}
The corresponding VLAD residual blocks~\citep{vlad2010} and
intra-normalized descriptor~\citep{intranorm2013} are
\begin{equation}
 \mathbf v_k^a=\sum_i w^a_{ik}(\tilde{\mathbf p}_i-\mathbf c_k),\qquad
 \mathbf v^a=\operatorname{L2}\!\left(
 [\operatorname{L2}(\mathbf v_1^a);\ldots;
  \operatorname{L2}(\mathbf v_K^a)]\right),
 \label{eq:vlad}
\end{equation}
where $a\in\{H,S\}$.  Hard assignment removes all non-winning anchor mass;
Soft assignment preserves graded residual evidence.  Their selection therefore
cannot be based only on which construction appears more stable.

\paragraph{Shared geometry and controller-specific evidence.}
The residual coordinates and spectral mixture are shared with the database-only
head. The formulas below define how the target-aware controller uses a
calibration set to assess transfer. Information ratios and joint evidence
scores are operational design choices; their role is evaluated through
assignment and CLS ablations, separately from the spectral-kernel proof.

\subsection{Capacity-coherent assignment}
\label{sec:capacity-assignment}

\paragraph{Hub-corrected information retention.}
Let $t_{a,s}(q)$ be the top-1 database index of calibration image $q$ under assignment $a$
and codebook $s$.  Its empirical distribution over the three constructions is
$p_a(j\mid q)$, and $p_a(j)=N_q^{-1}\sum_qp_a(j\mid q)$ is the pooled database
occupancy; $p_a(j\mid q)=|\mathcal S|^{-1}\sum_{s\in\mathcal S}
\mathbf{1}\{t_{a,s}(q)=j\}$. Using a stabilized, normalized
Kullback--Leibler-type score~\citep{kullback1951}, we define
\begin{equation}
 I_a(q)=\frac{1}{\log\min(N_q,N_d)}
 \sum_jp_a(j\mid q)\log\frac{p_a(j\mid q)}{p_a(j)+\epsilon} .
 \label{eq:retrieval-information}
\end{equation}
The marginal correction prevents a repeatedly retrieved hub from being
mistaken for informative agreement.  Applying the same calculation to
leave-self-out database pseudo queries gives $\bar I_a^d$; the calibration-set mean
is $\bar I_a^q$.  Their ratio
\begin{equation}
 C_a=\frac{\bar I_a^q}{\bar I_a^d+\epsilon}
 \label{eq:capacity-retention}
\end{equation}
measures how much reference-state information survives deployment shift.  Write $\operatorname{CI}_{.95}$ for a resampling-based 95\% interval. A
paired bootstrap~\citep{efron1979} estimates the 95\% interval
$\mathrm{CI}_C=\operatorname{CI}_{.95}[\log(C_S/C_H)]$; negative values favor
Hard and positive values favor Soft.

\paragraph{Independent relational evidence.}
Capacity measures how much information remains, but not whether the residual
geometry is coherent.  We therefore evaluate two views: the $K$ residual-block
norms (anchor-mass view $A$) and the complete normalized residual descriptor
(global-direction view $G$). In each view, we compare the calibration-image cosine
Gram matrix with the Gram matrix of the corresponding top-1 database matches.
The calibration-to-database score matrix also supplies the reciprocal rank of each
selected database column among calibration images and the fraction of reciprocal
top-1 matches.  A self-calibrated joint score combines non-negative centered
kernel alignment~\citep{cka2019}, mean reciprocal rank, mutual-top-1 support, and
effective-rank preservation, each normalized by its database-pseudo-query
reference.  Let $\kappa$, $M$, and $F$ denote these first three terms, and define
$u(x;x^d)=\operatorname{clip}(x/(x^d+\epsilon),0,1)$.  If
$r_q$ and $r_m$ denote the effective ranks of the centered query and
matched-descriptor Gram matrices, $P=\min(r_q/r_m,r_m/r_q)$ measures rank
preservation. A superscript $d$ denotes the corresponding database-pseudo-query
reference. The joint score is
\begin{equation}
 J=\bigl[u(\kappa;\kappa^d)u(M;M^d)u(F;F^d)\bigr]^{1/3}u(P;P^d).
 \label{eq:relation-joint}
\end{equation}
The geometric mean requires concurrent support from the three normalized
terms: a near-zero term limits the combined score. This multiplicative
construction expresses a conjunction of evidence, not statistical independence
or a calibrated probability of a correct match.
Effective rank is the participation ratio~\citep{participation2017}
$(\sum_j\eta_j)^2/(\sum_j\eta_j^2+\epsilon)$ of nonnegative Gram eigenvalues
$\eta_j$. In $J_a^R$, $a$ specifies assignment and $R$ the relational view.
For $R\in\{A,G\}$, repeated fixed calibration subsamples give
\begin{equation}
 m_R=\operatorname{median}(J_S^R-J_H^R),\qquad
 \mathrm{CI}_R=\operatorname{CI}_{.95}(J_S^R-J_H^R).
 \label{eq:relation-ci}
\end{equation}

Writing $b_R^-,b_R^+$ for the bounds of $\mathrm{CI}_R$ and $b_C^+$ for the upper
bound of $\mathrm{CI}_C$, the frozen assignment route is
\begin{equation}
\mathcal R_{\rm assign}=
\begin{cases}
H, & b_A^+<0\ \wedge\ b_G^+<0,\\
H, & b_C^+<0\ \wedge\ m_A<0\ \wedge\ m_G<0,\\
R, & b_A^->0,\\
S, & \text{otherwise}.
\end{cases}
\label{eq:assignment-route}
\end{equation}
The first two cases admit evidence deletion only under two-view structural
support or under significant capacity loss with coherent relation direction.
The route label $R$ denotes a reversible score mixture, distinct from the
relational-view index. Let $S_H,S_S$ be the Hard and Soft branch similarities,
$S_R$ their mixture, and $w_S$ its Soft weight. For
$\delta=\operatorname{clip}(m_A,-1,1)$,
\begin{equation}
 w_S=\sin^2\!\left(\frac{\pi(\delta+1)}{4}\right),\qquad
 S_R=(1-w_S)S_H+w_SS_S .
 \label{eq:assignment-angular}
\end{equation}
Zero is the only decision boundary.  Soft is the default because it does not
irreversibly discard non-winning anchor evidence.

\subsection{Support--agreement spectral carrier}
\label{sec:spectral-shaping}
For the selected assignment, a database-only float64 PCA of normalized VLAD
descriptors gives mean $\boldsymbol\mu$, directions $U$, and supported
eigenvalues $\lambda_j$.  A direction is retained when
$\lambda_j>\lambda_{\max}\max(N_d,d_v)\epsilon_{64}$, where
$\lambda_{\max}$ is the largest eigenvalue, $d_v$ is the descriptor dimension,
and $\epsilon_{64}$ is float64 machine precision. For exponent $\gamma$, let
\begin{equation}
 \phi_\gamma(\mathbf v)=\operatorname{L2}\!\left(
 \operatorname{diag}[(\lambda_j+\epsilon)^{-\gamma/2}]
 U^\top(\mathbf v-\boldsymbol\mu)\right).
 \label{eq:fixed-spectral-family}
\end{equation}
The conventional controls $\gamma\in\{0,0.5,1\}$ give centered PCA
projection, partial whitening, and full whitening~\citep{jegou2012negative}.
Original cosine retains the uncentered, unprojected descriptors.

For codebook $s$, let $\rho_s$ be the Pearson correlation between per-axis
database and calibration-set energies in the PCA basis. If $\{\nu_j\}$ are the
eigenvalues of the uncentered database Gram matrix, its participation ratio is
$\operatorname{PR}^{u}_s=(\sum_j\nu_j)^2/(\sum_j\nu_j^2+\epsilon)$.
The spectral-transfer score is
\begin{equation}
 \alpha_s=\operatorname{clip}(\rho_s,0,1)^{2\operatorname{PR}^{u}_s} .
\end{equation}
Both statistics are computed in the residual coordinates of the selected
assignment and vocabulary.
We aggregate construction dispersion using the normal-consistent MAD
scale~\citep{mad1993}. For $N_s=|\mathcal S|$ replicas, define
\begin{equation}
\begin{aligned}
 \widehat\sigma_{\rm MAD}(\alpha)
   &=\frac{\operatorname{MAD}_s(\alpha_s)}{\Phi^{-1}(3/4)},\\
 z_{.975}&=\Phi^{-1}(.975),\\
 c&=\operatorname{clip}\!\left(
 \operatorname{median}_s\alpha_s-
 \frac{z_{.975}\widehat\sigma_{\rm MAD}(\alpha)}{\sqrt{N_s}},0,1\right).
\end{aligned}
 \label{eq:spectral-confidence}
\end{equation}
Here $N_s=3$, and $c$ is a reliability score with a normal-reference scale.
Applying the same construction to $\rho_s$
gives the robust lower-side estimate $\rho_-$.  If
$\Delta_\mu$ is the median relative distance between calibration and database mean
descriptors, centering strength is
\begin{equation}
 \beta=\operatorname{clip}[\Delta_\mu(1-\rho_-),0,1] .
 \label{eq:center-confidence}
\end{equation}

Let $S_0$ be the original descriptor cosine, $S_c$ the cosine after centering
and projection, and $S_w(c)$ the cosine after additionally reweighting the
supported PCA coordinates by $\lambda_j^{-c/2}$.  With
$\ell(t)=\sin^2(\pi t/2)$, the local score is
\begin{equation}
 S_{\rm local}=(1-\ell(c))
 [(1-\ell(\beta))S_0+\ell(\beta)S_c]+\ell(c)S_w(c).
 \label{eq:angular-spectral}
\end{equation}
Each branch is independently L2-normalized, making
\cref{eq:angular-spectral} a direct-sum cosine kernel with the exact fallback
$S_{\rm local}=S_0$ at $c=\beta=0$.

Calibration compares three candidate carriers:
$S_A=S_{\rm local}$, fixed power-$0.5$ $S_P=S_w(1/2)$, and the original
no-PCA cosine $S_0$.  For carrier $m$, let $t_{m,s}(q)$ be the Top-1 database
index under codebook replica $s$.  Its mean database support and pairwise
replica agreement are
\begin{equation}
 \begin{aligned}
 U_m&=\frac1{N_s}\sum_{s\in\mathcal S}
 \left|\{t_{m,s}(q):q\in\mathcal Q\}\right|,\\
 A_m^{ij}&=\frac1{N_q}\sum_q
 \mathbf{1}\{t_{m,i}(q)=t_{m,j}(q)\}.
 \end{aligned}
 \label{eq:spectral-support-agreement}
\end{equation}
Here $i,j\in\mathcal S$ index distinct codebook replicas, not image patches.
Support alone can reward noisy coverage, whereas agreement alone admits
stable collapse.  The outer carrier selector therefore uses
\begin{equation}
 m^*=\begin{cases}
 0,&U_P\le U_0\ \wedge\
       \max_{i<j}(A_P^{ij}-A_0^{ij})<0,\\
 P,&U_P>U_A,\\
 P,&U_A>U_P\ \wedge\
       \max_{i<j}(A_A^{ij}-A_P^{ij})<0,\\
 A,&\text{otherwise}.
 \end{cases}
 \label{eq:spectral-carrier-route}
\end{equation}
Original cosine is restored when partial whitening fails to
expand support \emph{and} is unanimously less reproducible.  Power overrides
the adaptive carrier when it has a strict support advantage, or when adaptive
support expansion is unanimously less reproducible. Otherwise the inner
adaptive carrier is preserved, including exact support ties.

\subsection{CLS fusion with a retention veto}
\label{sec:cls-veto}
The local score captures anchored patch residuals, whereas CLS supplies global
semantic evidence.  Their score scales are calibrated from leave-self-out
database rows.  Let $\sigma_b^d$ be the median off-diagonal row standard
deviation for branch $b\in\{l,g\}$, where $l$ denotes local and $g$ denotes
CLS. Let $S_l$ be local similarity and $S_g$ cosine similarity of normalized
global tokens. Define $B_b$ as the smaller of mean calibration and pseudo-query
information for branch $b$, $J_b$ as its median calibrated relation score, and
$H_b$ as its pooled extreme-support coverage, specified below. Branch quality is
\begin{equation}
 Q_b=(B_bJ_bH_b)^{1/3},
 \label{eq:branch-quality}
\end{equation}
If $O_b$ database items are occupied by the
pooled top-1 predictions, $M_b$ is the largest observable support, and
$p_b^{\max}$ is the largest pooled occupancy probability, then
$H_b=\sqrt{(O_b/M_b)/(p_b^{\max}M_b)}$.  It penalizes both unused gallery
support and a dominant top-1 hub.
The cube root preserves the scale of the component scores while penalizing
a weak information, relation, or coverage component. This quality score
provides nonnegative relative weights for the candidate fusion:
\begin{equation}
 S_F=\frac{Q_l}{Q_l+Q_g}\frac{S_l}{\sigma_l^d}
     +\frac{Q_g}{Q_l+Q_g}\frac{S_g}{\sigma_g^d}.
 \label{eq:cls-fusion}
\end{equation}

Quality alone can still overestimate CLS under condition shift.  Let $T_b$ be
the ratio between the mean top-1 z-score of calibration rows and its
database-pseudo-query analogue.  We compare CLS with the local branch through
score-evidence retention $r_T=T_g/(T_l+\epsilon)$, information-capacity ratio
$r_C=B_g/(B_l+\epsilon)$, and information--relation reliability ratio
\begin{equation}
 r_J=\frac{\sqrt{B_gJ_g}}{\sqrt{B_lJ_l}+\epsilon}.
 \label{eq:cls-retention-ratios}
\end{equation}
The point test $Q_g\ge Q_l$
is retained only if $r_T\ge1$.  A complementarity rescue is allowed when
local and CLS row-dispersion statistics are negatively correlated for every
codebook and all three non-collapse ratios are at least one.  Specifically,
$S_l^s(q,j)$ is the local score for query $q$ and database image $j$ under
construction $s$, while $S_g(q,j)$ is the global-token score.
$\operatorname{Std}_j$ denotes row standard deviation over database images,
and $\operatorname{Corr}_q$ denotes Pearson correlation over calibration queries.
Thus $\operatorname{corr}_s=\operatorname{Corr}_q[
\operatorname{Std}_j S_l^s(q,j),\operatorname{Std}_j S_g(q,j)]$, and
\begin{equation}
 \begin{aligned}
 \operatorname{admit}_{\rm CLS}={}&
 (Q_g\ge Q_l\ \wedge\ r_T\ge1)\\
 &\vee\bigl(\forall s:\operatorname{corr}_s<0\ \wedge\
 r_C,r_T,r_J\ge1\bigr).
 \end{aligned}
 \label{eq:cls-veto}
\end{equation}
The CLS decision, local quality, and local score unit are frozen using the
inner adaptive carrier $S_A$ before applying
\cref{eq:spectral-carrier-route}. The selected carrier supplies the local
score in \cref{eq:cls-fusion}, using the fixed fusion weights and scales.
The final score is $S_F$ with that
substitution when CLS is admitted and exactly the selected local carrier
otherwise.
The boundaries at zero and one express directional support and retention
relative to the local branch.

\subsection{Calibration and deployment}
TFA-C64 first evaluates the residual hypotheses on $\mathcal D$ and the
64 calibration images, fixes assignment, and estimates spectral and CLS
controls. The resulting head is reused for every evaluation query without
re-estimating target statistics. Calibration requires neither place labels
nor model-weight updates.

\paragraph{Full-calibration reference.}
TFA-FullQ applies the same target-calibration formulation with
$\mathcal Q$ equal to the entire unlabeled evaluation query set, replacing
the budget $N_q=64$ by its full size. This idealized data-access setting
provides an empirical full-access reference for finite-sample calibration.
Independent deployment uses TFA-C64.

\paragraph{Larger-backbone system.}
The separate TFA-C64 system in Table~\ref{tab:iclr-strongest} combines
dual vocabularies with target-score calibration. It likewise fixes its
statistics on disjoint calibration images before processing evaluation
queries. Appendix~\ref{sec:native-interfaces} specifies its feature interface
and the complementary-fold full-calibration reference.

\section{Dataset sources and protocol attribution}
\label{sec:dataset-sources}
Table~\ref{tab:dataset-sources} attributes the datasets used in the main
comparisons and supplementary analyses. Dataset publications and benchmark
releases are distinguished from our retrieval adaptations; a source citation
does not imply that our split, gallery construction, or positive threshold
is identical to the source paper's localization experiment.

\begin{table}[ht]
\centering\small
\caption{Dataset sources. Multiple maps, conditions, heights, and retrieval
directions share their parent dataset reference.}
\label{tab:dataset-sources}
\setlength{\tabcolsep}{4pt}
\begin{tabular}{@{}p{.29\linewidth}p{.67\linewidth}@{}}\toprule
Dataset / protocols & Original source and benchmark attribution\\\midrule
Pitts30k / Pitts250k & Pittsburgh imagery~\citep{pittsburgh2013}; retrieval splits~\citep{netvlad}.\\
MSLS-val & Mapillary Street-Level Sequences~\citep{msls}.\\
St-Lucia & \citet{stlucia2008}; curated retrieval release~\citep{vprbenchmark2022}.\\
Eynsham & \citet{eynsham2009}; curated retrieval release~\citep{vprbenchmark2022}.\\
GSV-Brussels & Brussels subset of GSV-Cities~\citep{gsvcities}.\\
Essex3in1 & \citet{essex2018}; VPR-Bench release~\citep{vprbench2021}.\\
SPED & Specific Places Dataset~\citep{sped}.\\
Nordland & Seasonal railway traversals~\citep{nordland2013}.\\
AmsterTime & Historical/contemporary images~\citep{amstertime}.\\
CrossSeason & Cross-season correspondences~\citep{crossseason2019}; VPR-Bench release~\citep{vprbench2021}.\\
4Seasons (six maps) & Cross-season driving dataset~\citep{fourseasons}.\\
RobotCar (nine conditions) & Original data~\citep{robotcar2017}; Seasons benchmark~\citep{robotcarseasons}.\\
Baidu-Mall & Indoor localization dataset~\citep{baidumall2017}; AnyLoc packaging~\citep{anyloc}.\\
V4RL shopping & Shopping Street, urban place-recognition dataset~\citep{v4rl2018}.\\
SVOX (five conditions) & Street View Oxford~\citep{svox2021}.\\
UAV-VisLoc & UAV imagery and satellite maps~\citep{uavvisloc}.\\
VPAIR-full & Aerial retrieval/localization dataset~\citep{vpair}.\\
SUES-200 (four heights) & Multi-height drone/satellite benchmark~\citep{sues200}.\\
DenseUAV & Low-altitude UAV self-positioning dataset~\citep{denseuav}.\\
University (both directions) & University-1652~\citep{university1652}.\\
Urbanscape & CrossLoc benchmark~\citep{crossloc2022}; our rendered-reference retrieval protocol.\\
Park & UAV-to-synthetic-map localization environment from RIM~\citep{rim2026}, with near-nadir views and extensive low-texture regions; adapted for retrieval.\\
Tokyo24/7 & View-synthesis place-recognition benchmark~\citep{tokyo247}.\\
Nardo-Air & Aerial stress-test data distributed with AnyLoc~\citep{anyloc}.\\
AnyVisLoc & Low-altitude multi-view localization benchmark~\citep{anyvisloc2026}.\\
\bottomrule\end{tabular}
\end{table}

Park uses UAV observations and rendered map references from the
challenging, largely low-texture environment described in RIM~\citep{rim2026}.
Urbanscape retains the CrossLoc image source but uses our reference-map
retrieval construction. These two protocols use 50\,m horizontal-distance
positives. UAV-VisLoc uses coverage-qualified 100\,m recall. Other protocols
retain their registered positive sets. Public data formatting also draws on
the Deep Visual Geo-Localization Benchmark~\citep{vprbenchmark2022} and,
where noted, VPR-Bench~\citep{vprbench2021} and AnyLoc~\citep{anyloc}.

\section{Evaluation protocols and system configurations}
\label{sec:iclr-evidence-scope}
Tables~\ref{tab:iclr-primary} and~\ref{tab:iclr-aerial} evaluate the
database-calibrated TFA head; disjoint target calibration is reported as
TFA-C64. Full-query measurements provide a transductive reference.
Pittsburgh contributes two related protocols; six
4Seasons maps are grouped for dataset-level counts. RobotCar uses a
nine-condition macro and SUES a four-height macro. Pitts250k scores its full
gallery while fitting PCA on a fixed 5,000-image sample. Park and Urbanscape
AnyLoc controls use seed 0, except Urbanscape DINOv3, which averages three
seeds. Primary protocols require at least 500 queries; smaller
protocols are used for diagnostics. Retrieval positives follow the released
or registered dataset protocol, not one shared geometric radius.

\subsection{Native-system interfaces}
\label{sec:native-interfaces}
We distinguish three configurations. TFA is the single-vocabulary DB-only
head. TFA-C64 and TFA-FullQ use the target-aware controller at finite and
full query access, respectively. TFA-C$N$ denotes a calibration budget of $N$.
Table~\ref{tab:iclr-strongest} uses the dual-vocabulary G-based system, whereas
Table~\ref{tab:iclr-primary} uses the single-vocabulary B-based head. The G-based extension combines
a source K32 vocabulary and a map-fitted K64 vocabulary with score calibration.
The three map-codebook seeds are experimental replicas, not three fused branches.

Table~\ref{tab:iclr-strongest} uses each method's native interface.
SelaVPR++ uses the DINOv2-L performance model at $322^2$, with 512-bit
Hamming Top-100 retrieval followed by 4096-D global-feature reranking;
Table~\ref{tab:iclr-primary} instead uses its DINOv2-B single-branch model.
TF-VPR uses $336^2$. AnyLoc uses released K32 vocabularies at $640\times480$;
SegVLAD-PreT retains native image sizes and segment voting.
RIA retains paper values on Pitts30k and St-Lucia; other Table~\ref{tab:iclr-strongest}
entries use our covariance implementation with Giant L31 value features at
$640\times480$, averaged over projection seeds 0/1/42.
Sample4Geo uses one University-trained ConvNeXt-B checkpoint at $384^2$ across
all aerial protocols. Its published drone-to-satellite R@1 is 92.65; AP is
not substituted for R@5/R@10. Author-log and checkpoint-inference values are
marked separately in Table~\ref{tab:iclr-aerial}.

In Table~\ref{tab:iclr-strongest}, TFA-FullQ (Cross-fit) uses even/odd folds:
each query uses statistics from the opposite fold. Fixed-budget TFA-C$N$ uses disjoint calibration
and evaluation sets; evaluation complements vary with $N$. SVOX uses
condition-specific calibration. Cross-system rankings compare reported
performance under these declared access settings.

\paragraph{University reverse protocol.}
University-1652 satellite-to-drone uses 701 queries and 51,355 database images.
Predictions are sealed before identity-based evaluation. TFA/AnyLoc average
seeds 0/1/2; RIA uses projection seeds 0/1/42; TF-VPR is deterministic.

\paragraph{Dual-vocabulary calibration.}
The controlled D2-B batch study uses database-fitted K32 and K64 branches, so
its gain does not require a released external vocabulary. It fixes partial
whitening, equal branch weights, unit temperature, and even/odd folds. It
improves the stronger single-capacity branch in 27/28 cells, but exceeds the
TFA-FullQ head in 19/23 protocols. RobotCar is a major exception:
even/odd calibration gives 71.94 against 89.40 for K64, while contiguous halves
give 90.88. Nordland has the opposite split sensitivity, 65.47 versus 56.76.
These controls show the sensitivity of score calibration to calibration-set
composition.

The G-backbone Cal64 curves use ten disjoint splits per calibration size and
three codebooks averaged within each split. Source-like development maps for
this score operator are Pitts30k, St-Lucia, and VPAIR; SPED and SVOX use the
frozen operator. The reported 97.41--99.76\% absolute-performance retention is
different from net-gain recovery, which is 85.1/50.3/21.9/80.3/89.6\% for
Pitts30k/SPED/St-Lucia/VPAIR/SVOX. The St-Lucia gain is only 0.30 points, making
its gain-recovery ratio particularly sensitive to small changes.

\paragraph{Scale controls and timing scope.}
At fixed final-token $322^2$ interfaces, the inner-head comparison gives
four-map means 73.12/73.17/77.00 for DINOv2-B/L/G and margins 4.00/0.46/1.97
to the strongest matched head. These measurements isolate the inner head,
with the final outer selector excluded.
Nordland-L is the largest failure,6.79 points below its strongest comparator.
Encoder-only batch-one RTX 3090 latencies are 12.47/33.05/100.35 ms for B/L/G;
these figures do not measure end-to-end retrieval or map preparation.

\subsection{Deployment timing}
\label{sec:deployment-timing}
The RTX 3090 batch-one comparison uses FP32, CUDA-resident query tokens,
20 warm-up iterations, and 100 timed batches, with exact Top-10 search over
the 1,231-image AmsterTime gallery. Image decoding, resizing, transfers, and
map preparation are excluded for all methods. The common DINOv2-B encoder
takes 12.468 ms and the TFA-FullQ head takes 1.223 ms, totaling 13.691 ms.
This configuration stores 245.62 MiB of database descriptors and 223.88 MiB
of projection bases.

Table~\ref{tab:deployment-h100} measures the released database-only and C64
implementations on one H100 PCIe GPU using identical DINOv2-B/14 Pitts30k
features at $322^2$, a 10,000-image gallery, and the same 6,752 evaluation
queries. The remaining 64 queries calibrate C64 and are excluded from
evaluation for both variants. Three fresh-process repetitions alternate
variant execution order, with four CPU threads and a 20-GiB CUDA allocator
limit. Setup includes database PCA and controller estimation; shared feature
extraction and codebook construction are excluded. Retrieval timings include
descriptor loading, transfers, score computation, and Top-10 selection for
all three codebook replicas. Replicas are evaluated independently, rather
than fused into an ensemble. The implementations use their native numerical
and batching paths; these wall times characterize the released pipelines,
including their I/O and intermediate computations. All repetitions produce
identical rankings within each variant.

\begin{table}[ht]
\centering\small
\caption{Pitts30k deployment costs on H100, mean$\pm$sample SD over three
repetitions. Retrieval covers 6,752 queries and three codebook replicas.
Amortized time is per query per replica, rather than batch-one latency.}
\label{tab:deployment-h100}
\begin{tabular}{lrrr}\toprule
System & Setup (s) & Retrieval (s) & Amortized (ms)\\\midrule
TFA & 78.38$\pm$1.78 & 7.01$\pm$0.17 & 0.346$\pm$0.008\\
TFA-C64 & 138.52$\pm$2.83 & 27.22$\pm$1.02 & 1.344$\pm$0.050\\\bottomrule
\end{tabular}
\end{table}
Within C64 setup, target-aware controller estimation takes
37.93$\pm$0.35 s. The additional cost reflects both calibration and
implementation-specific computation; it does not isolate the cost of
accessing target observations alone.

\subsection{Component analysis}
\label{sec:component-analysis}
The spectral controls in Table~\ref{tab:iclr-spectral} isolate transformations
under the TFA-FullQ assignment. Its last three columns compare complete
database-only, C64, and TFA-FullQ heads. The component ablations below use
query-calibrated control and are distinct from that complete-head comparison.

\paragraph{Unlabeled spectral selection.}
Table~\ref{tab:iclr-spectral}(a) holds the no-label assignment fixed before changing
the spectral operation. Whitening helps SPED but collapses AmsterTime; the
same support--agreement rule keeps adaptive on SPED and partial whitening on
Pitts30k/AmsterTime. The remaining SPED gap to post-hoc whitening is 0.33 points.
On ten carrier configurations, the final head wins/ties/loses 3/7/0 against
fixed partial whitening, with an equal-cell mean gain of 0.82 points. Against
the earlier adaptive complete head the corresponding count is 7/2/1, mean gain
2.22, and worst change$-0.43$. The two comparisons quantify gains over a
fixed spectrum and an adaptive head, respectively.

\paragraph{Both support and agreement are needed.}
A support tie incorrectly treated as evidence for partial whitening reduces
MSLS-val from 75.86 to 73.29. A support-contraction-only fallback instead sends
Tokyo24/7 to original cosine at 87.09, whereas requiring agreement to contradict
the intervention gives 92.27. Tokyo serves as a small-query mechanism case.
Together these controls demonstrate the complementary roles of retrieval
support and cross-construction agreement.

\paragraph{Spectral selection transfers across physical maps.}
The four maps in Table~\ref{tab:iclr-spectral}(b) were excluded from formulation of
the outer rule. Its predictions were frozen before the registered geometry was
read. The rule selects partial/adaptive/original/original, matching the best
displayed candidate on all four maps and improving the adaptive-spectrum mean by 0.81.
This evaluates outer-selector transfer within a dataset family previously
used for inner-head development.

\paragraph{Complementary roles of assignment and CLS.}
Stable multi-anchor assignment on the Nardo-D3 stress case obtains 46.48 R@1,
whereas single-anchor assignment reaches 68.54. The converse occurs on a
4Seasons City Loop split, where forcing single-anchor assignment loses 2.41
points. In the four-cell City-Loop/Countryside D2/D3 assignment comparison, the
capacity--relation rule selects the post-hoc best fixed assignment in three
cells and remains within 0.56 points in the fourth.

Across 17 development cells, unconditional CLS fusion has mean gain 0.64 but
degrades 8 cells and loses up to 7.51 points. The retention veto has mean
gain 2.17, admits a positive change in four cells, and returns exactly to the
local branch in thirteen. Selective admission therefore improves the benefit
of CLS while preserving the local branch when evidence is weak.
The assignment, spectrum, and CLS experiments measure component-specific effects
on their respective evaluation populations.

\paragraph{Reproduction and supplementary coverage.}
AnyLoc's author-pipeline Pitts30k check gives 87.63/94.69 R@1/R@5 against the
reported 87.7/94.7. The RIA reconstruction gives 83.13/91.05, below its
fixed-seed 86.36 R@1 reference; it remains explicitly a diagnostic port. Its
positive results are nevertheless retained in the University comparison.
AnyVisLoc contains 24 scenes. The supplementary SegVLAD adaptation covers
seven scenes, so no all-scene aggregate is reported for that adaptation.

\paragraph{Interpreting the mechanism.}
The confidence statistic in the inner spectral kernel depends on residual
coordinates. For SPED-G, retaining native raw center norms changes spectral
participation ratios and can make the adaptive gate under-intervene even when
fixed whitening remains useful. The result motivates stating the declared
unit-center interface precisely, rather than claiming invariance to all
backbone facets and residual parameterizations. The direct-sum proposition
establishes exact geometric fallback only; empirical recall improvements come
from the ablations and held-out decisions, not from an unsupervised accuracy
guarantee.

\begin{table}[t]\centering\footnotesize
\setlength{\tabcolsep}{3pt}
\caption{Ground-backbone comparison, R@1 (\%). TFA uses DB-only calibration; C64 uses disjoint calibration (mean$\pm$SD). TFA-FullQ is the transductive reference.}
\label{tab:ground-d3-audit}
\begin{tabular}{@{}llrrrrrr@{}}\toprule
Dataset & Backbone & AnyLoc & TF-VPR & RIA$^d$ & TFA & TFA-C64 & TFA-FullQ\\\midrule
MSLS-val & DINOv2-B & 52.70 & 41.49 & 44.41 & 70.09 & 71.16$\pm$2.55 & 75.86\\
MSLS-val & DINOv3-B & 47.39 & 40.54 & 39.77 & 72.21 & 63.17$\pm$5.24 & 71.53\\
\addlinespace
SPED & DINOv2-B & 75.57 & 73.48 & 69.85 & 85.12 & 83.79$\pm$0.63 & 85.01\\
SPED & DINOv3-B & 70.95 & 76.94 & 69.85 & 87.92 & 85.75$\pm$0.84 & 86.55\\
\addlinespace
Nordland & DINOv2-B & 29.50 & 29.46 & 21.03 & 39.60 & 30.23$\pm$3.27 & 35.76\\
Nordland & DINOv3-B & 25.15 & 30.81 & 19.07 & 40.17 & 31.51$\pm$4.90 & 30.98\\
\addlinespace
Essex3in1 & DINOv2-B & 78.25 & 73.81 & 71.11 & 82.38 & 82.21$\pm$0.85 & 83.81\\
Essex3in1 & DINOv3-B & 76.35 & 66.19 & 68.57 & 79.68 & 78.68$\pm$2.54 & 81.27\\
\addlinespace
CrossSeason & DINOv2-B & 99.83 & 100.00 & 99.65 & 100.00 & 99.97$\pm$0.08 & 100.00\\
CrossSeason & DINOv3-B & 100.00 & 100.00 & 99.83 & 100.00 & 100.00$\pm$0.00 & 100.00\\
\addlinespace
V4RL shopping & DINOv2-B & 89.05 & 82.97 & 86.97 & 69.54 & 86.91$\pm$2.66 & 87.29\\
V4RL shopping & DINOv3-B & 91.53 & 88.49 & 84.57 & 72.18 & 89.36$\pm$2.27 & 87.13\\
\addlinespace
SVOX-5 macro & DINOv3-B & 50.09 & 56.51 & 44.50 & 68.00 & 57.94$\pm$7.28 & 79.40\\
\addlinespace
\bottomrule\end{tabular}
\par\smallskip\raggedright\scriptsize C64 excludes calibration images; other columns evaluate the full query set. SVOX uses 64 calibration images across five conditions, with macro averaging before split-wise SD. TFA-FullQ is a system-level reference, not a paired calibration-budget ablation.
\end{table}

\clearpage
\section{Additional calibration results}
\begin{table}[!ht]
\centering\footnotesize
\setlength{\tabcolsep}{3pt}
\caption{Ground query-calibrated variants, R@1 (\%). DINOv2-B/14, $322^2$. C64: mean$\pm$SD; DB-only results appear in Table~\ref{tab:iclr-primary}.}
\label{tab:full-query-reference}
\begin{tabular}{@{}lrr@{\hspace{12pt}}lrr@{}}\toprule
Dataset & TFA-C64 & TFA-FullQ & Dataset & TFA-C64 & TFA-FullQ\\\midrule
Pitts30k & 84.33$\pm$1.59 & 86.00 & CrossSeason & 99.97$\pm$0.08 & 100.00\\
Pitts250k & 83.70$\pm$1.63 & 87.51 & 4S Business Campus & 95.55$\pm$0.65 & 94.64\\
MSLS-val & 71.16$\pm$2.55 & 75.86 & 4S City Loop & 82.88$\pm$0.77 & 83.94\\
St-Lucia & 95.72$\pm$2.87 & 97.95 & 4S Countryside & 36.76$\pm$5.75 & 41.85\\
Eynsham & 85.19$\pm$3.92 & 88.45 & 4S Office Loop & 90.20$\pm$0.75 & 90.54\\
GSV-Brussels & 84.75$\pm$0.90 & 84.36 & 4S Old Town & 70.20$\pm$1.87 & 70.71\\
Essex3in1 & 82.21$\pm$0.85 & 83.81 & 4S Parking Garage & 99.30$\pm$0.28 & 99.51\\
SPED & 83.79$\pm$0.63 & 85.01 & RobotCar (nine cond.) & 88.33$\pm$0.46 & 89.34\\
Nordland & 30.23$\pm$3.27 & 35.76 & Baidu-Mall & 63.55$\pm$1.71 & 62.29\\
AmsterTime & 52.00$\pm$1.76 & 52.72 & V4RL shopping & 86.91$\pm$2.66 & 87.29\\
\bottomrule\end{tabular}
\par\smallskip\raggedright\scriptsize 4S: 4Seasons. TFA-FullQ uses all queries for calibration and evaluation; C64 evaluates the disjoint complement. TFA-FullQ provides a system-level reference; Table~\ref{tab:iclr-strongest} compares calibration budgets within a fixed system.
\end{table}

\begin{table}[!ht]
\centering\footnotesize
\setlength{\tabcolsep}{3pt}
\caption{Aerial query-calibrated variants, R@1/5/10 (\%). C64: disjoint 64-query calibration (mean$\pm$SD); TFA-FullQ: full-query calibration. DB-only results appear in Table~\ref{tab:iclr-aerial}.}
\label{tab:iclr-aerial-access}
\begin{tabular}{@{}llrrrrrr@{}}\toprule
 & & \multicolumn{3}{c}{TFA-C64} & \multicolumn{3}{c}{TFA-FullQ}\\
\cmidrule(lr){3-5}\cmidrule(lr){6-8}
Dataset & Backbone & R@1 & R@5 & R@10 & R@1 & R@5 & R@10\\\midrule
UAV-VisLoc & D2-B & 15.05$\pm$2.34 & 31.54$\pm$3.89 & 38.92$\pm$4.00 & 18.50 & 37.46 & 44.94\\
 & D3-B & 18.33$\pm$2.22 & 36.43$\pm$3.55 & 43.90$\pm$3.59 & 20.61 & 40.42 & 47.96\\
\rowcolor{gray!10} VPAIR-full & D2-B & 38.52$\pm$2.93 & 57.56$\pm$3.48 & 65.71$\pm$3.53 & 39.71 & 60.05 & 68.63\\
\rowcolor{gray!10} & D3-B & 38.71$\pm$1.73 & 57.42$\pm$1.99 & 65.69$\pm$1.98 & 40.32 & 60.52 & 69.30\\
SUES-200 & D2-B & 58.65$\pm$2.00 & 82.58$\pm$1.26 & 89.72$\pm$0.87 & 60.89 & 83.77 & 90.45\\
 & D3-B & 66.55$\pm$2.26 & 86.06$\pm$1.58 & 91.38$\pm$1.27 & 73.58 & 91.59 & 96.00\\
\rowcolor{gray!10} DenseUAV & D2-B & 7.53$\pm$0.82 & 23.12$\pm$2.41 & 34.21$\pm$3.53 & 7.85 & 24.45 & 36.09\\
\rowcolor{gray!10} & D3-B & 6.22$\pm$1.24 & 20.56$\pm$3.52 & 30.59$\pm$4.28 & 8.57 & 27.64 & 39.34\\
Park & D2-B & 36.61$\pm$1.33 & 74.77$\pm$2.05 & 85.58$\pm$1.81 & 38.97 & 78.08 & 88.64\\
 & D3-B & 44.87$\pm$0.58 & 80.52$\pm$0.81 & 88.41$\pm$0.98 & 43.18 & 80.57 & 88.99\\
\rowcolor{gray!10} Urbanscape & D2-B & 81.97$\pm$3.81 & 94.21$\pm$1.79 & 96.20$\pm$1.03 & 84.93 & 95.78 & 97.32\\
\rowcolor{gray!10} & D3-B & 84.71$\pm$1.93 & 94.38$\pm$0.74 & 96.04$\pm$0.51 & 87.11 & 95.10 & 96.33\\
University D$\to$S & D2-B & 40.39$\pm$4.40 & 58.23$\pm$5.23 & 65.39$\pm$5.25 & 44.27 & 61.81 & 68.78\\
 & D3-B & 45.28$\pm$2.59 & 68.72$\pm$2.94 & 76.78$\pm$2.76 & 45.74 & 69.59 & 77.55\\
\rowcolor{gray!10} University S$\to$D & D2-B & 53.73$\pm$4.67 & 66.58$\pm$4.60 & 71.84$\pm$5.35 & 57.44 & 69.09 & 75.13\\
\rowcolor{gray!10} & D3-B & 85.59$\pm$0.37 & 93.24$\pm$0.25 & 94.99$\pm$0.18 & 85.78 & 93.20 & 95.01\\
\bottomrule\end{tabular}
\par\smallskip\raggedright\scriptsize D2/D3-B: DINOv2/v3-B; D/S: drone/satellite. SUES uses 64 calibration images across four heights, with macro averaging before split-wise SD. University directions are separate protocols. C64 excludes calibration images; TFA-FullQ evaluates all queries. Metrics follow Table~\ref{tab:iclr-aerial}. DB-only and query-calibrated variants differ in both control statistics and data access.
\end{table}

\clearpage
\section{Calibration-size sensitivity}
\label{sec:paired-single-calibration}
\paragraph{Paired single-vocabulary calibration.}
With identical DINOv2-B/14 features at $322^2$ and the same target-aware
controller, TFA-C64 and TFA-FullQ are evaluated on the same held-out C64
complement in each of ten splits. TFA-FullQ additionally accesses these
evaluation images during calibration. Mean R@1 for C64 versus TFA-FullQ is
84.58/86.35 on Pitts30k, 83.79/84.64 on SPED, 95.72/98.04 on St-Lucia,
51.99/52.41 on AmsterTime, and 63.55/62.37 on Baidu-Mall.
C64 sample standard deviations are 1.39, 0.63, 2.87, 1.75, and 1.71 points,
respectively. The paired results quantify the effect of calibration access
within a fixed system, with no uniform advantage from using all queries.
\begin{figure}[h]
\centering
\includegraphics[width=\linewidth]{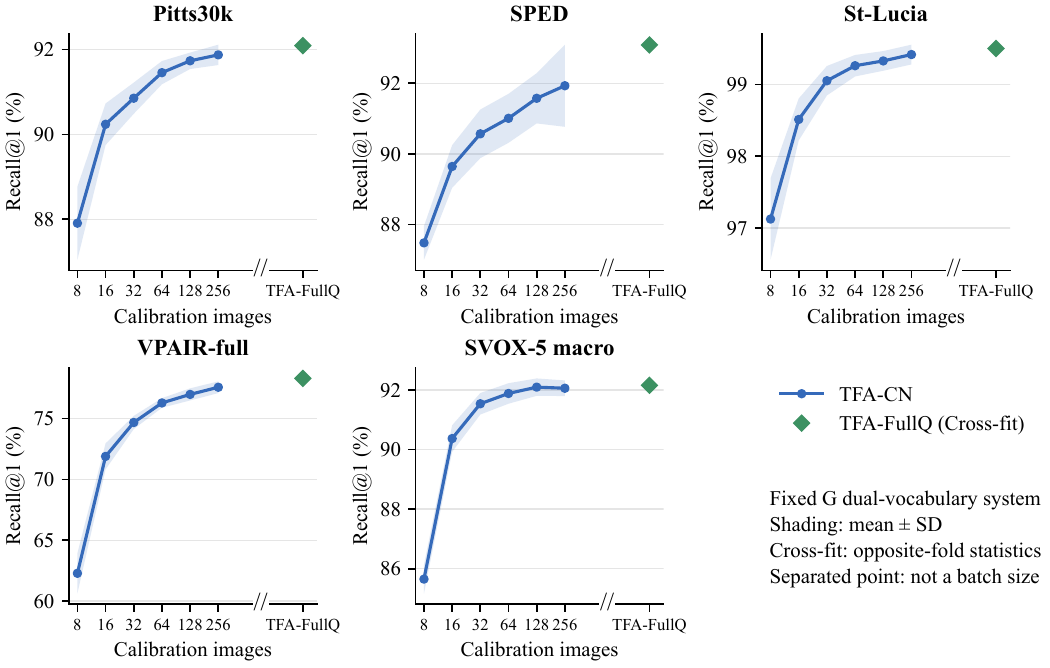}
\caption{Disjoint calibration-size experiment, R@1. Blue: DINOv2-G
dual-vocabulary TFA-C$N$ system, mean$\pm$SD over ten splits. The separated TFA-FullQ (Cross-fit) point
uses opposite-fold query statistics (Table~\ref{tab:iclr-strongest}); its horizontal
position is schematic. All points use the same backbone and dual-vocabulary
configuration; calibration and evaluation sets follow the stated protocols.}
\label{fig:iclr-calibration}
\end{figure}

\clearpage
\section{Extended cross-method retrieval comparisons}
\begin{figure}[h]
\centering
\includegraphics[width=.96\linewidth,height=.81\textheight,keepaspectratio]{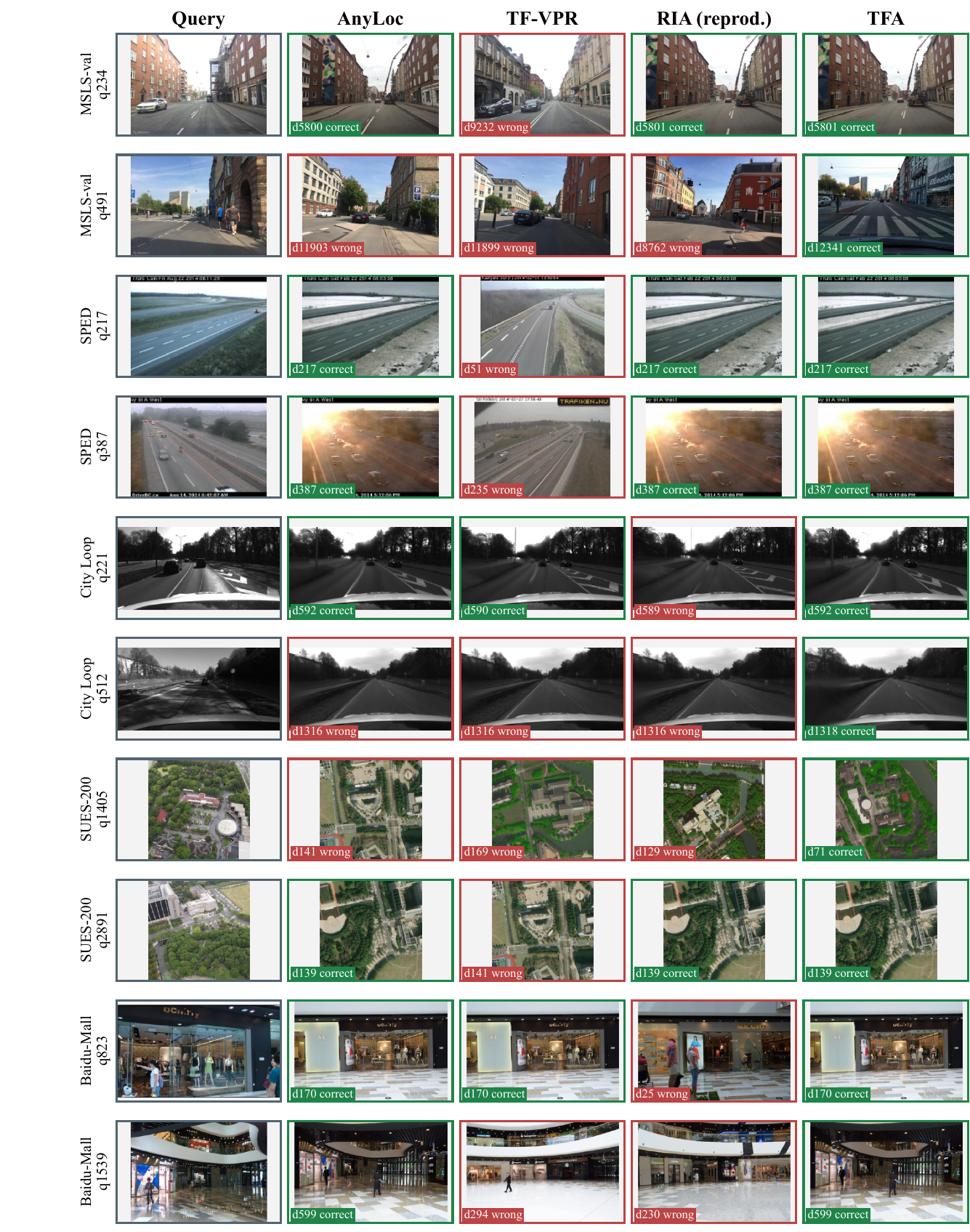}
\caption{Additional Top-1 retrievals where DB-only TFA succeeds and at least
one baseline fails (DINOv2-B, seed 0). Green/red indicate correctness under
the dataset positives; RIA uses our same-feature reconstruction.
Two query-index-tertile examples per dataset exclude the main-figure queries.
Images retain their full field of view.}
\label{fig:supp-retrieval-success}
\end{figure}
\clearpage
\begin{figure}[h]
\centering
\includegraphics[width=.96\linewidth,height=.81\textheight,keepaspectratio]{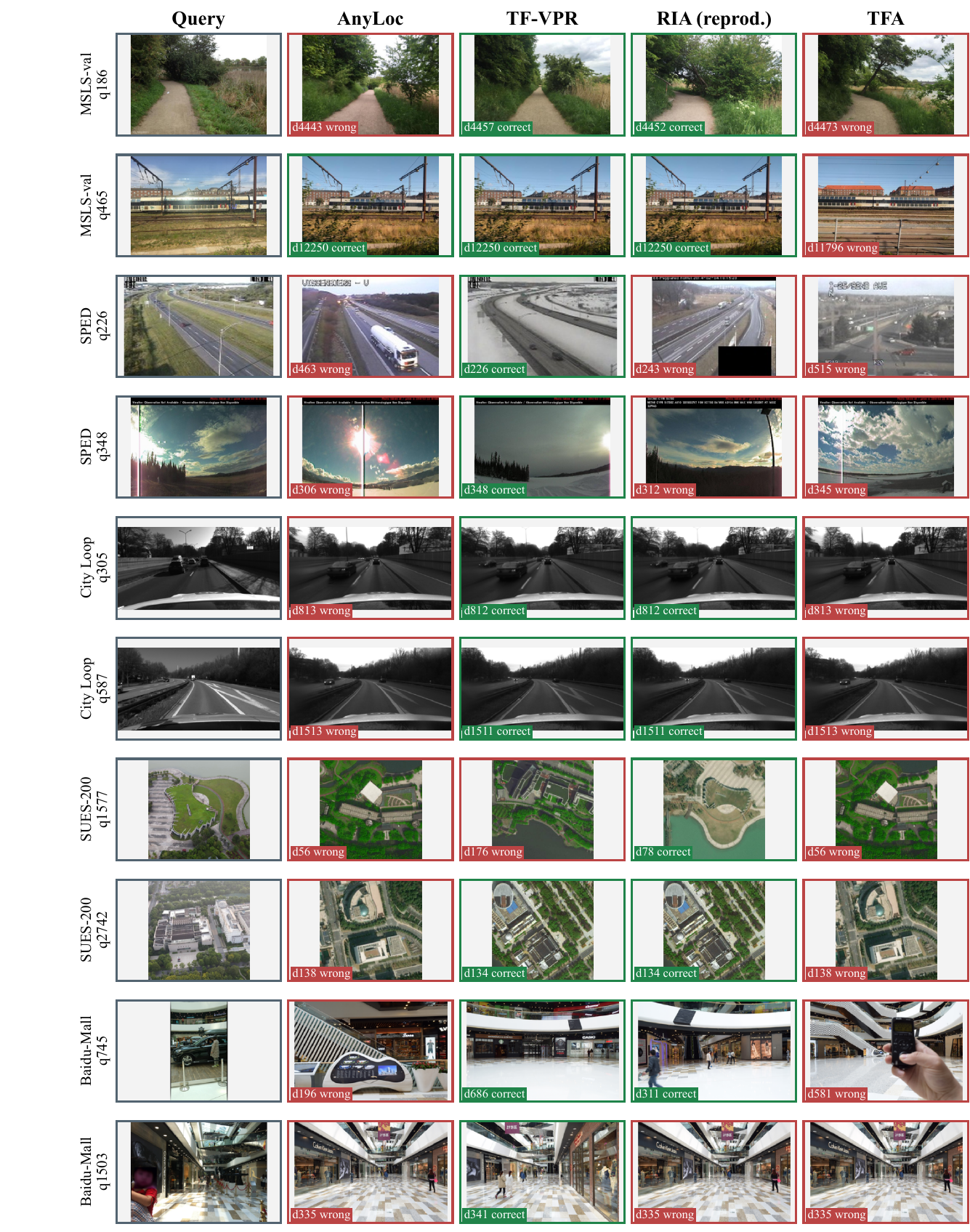}
\caption{Complementary failures: DB-only TFA misses a match recovered by at
least one baseline. The protocol and sampling rule follow
Figure~\ref{fig:supp-retrieval-success}, with two examples per dataset.
Correctness follows the dataset positives, not visual similarity alone.}
\label{fig:supp-retrieval-failure}
\end{figure}

\end{document}